\documentclass{article}

\usepackage[final]{neurips_2025}

\usepackage[utf8]{inputenc} 
\usepackage[T1]{fontenc}    
\usepackage{hyperref}       
\usepackage{url}            
\usepackage{booktabs}       
\usepackage{amsfonts}       
\usepackage{nicefrac}       
\usepackage{microtype}      
\usepackage{xcolor}         
\usepackage[textsize=tiny]{todonotes}
\usepackage{tcolorbox}
\usepackage[dvipsnames]{xcolor}
\usepackage{amsmath}
\usepackage{algorithm}
\usepackage{algorithmic}
\usepackage{amsthm}
\usepackage{thmtools}
\usepackage{amssymb}

\declaretheorem[name=Theorem]{theorem}
\declaretheorem[name=Theorem,numbered=no]{theorem*}

\definecolor{darkerlogocolor}{RGB}{20, 0, 145}  
\newcommand{\ours}{\texttt{metaTextGrad}}
\newcommand{\argmax}{\mathop{\mathrm{arg\,max}}\limits}

\newtcolorbox{ttcolorbox}[1][]{colframe=darkerlogocolor, colback=darkerlogocolor!4!white, title=#1}
\newtcolorbox{apxtcolorbox}[1][]{colframe=black, colback=black!3!white, title=#1}
\usepackage[frozencache,cachedir=minted-cache]{minted}

\title{\ours: Automatically optimizing language model optimizers}

\author{%
  Guowei Xu \\
  Tsinghua University \\
  \And
  Mert Yuksekgonul \\
  Stanford University \\
  \AND
  Carlos Guestrin \\
  Stanford University \\
  \And
  James Zou \\
  Stanford University \\
}

\begin{document}

\maketitle

\begin{abstract}
Large language models (LLMs) are increasingly used in learning algorithms, evaluations, and optimization tasks. Recent studies have shown that using LLM-based optimizers to automatically optimize model prompts, demonstrations, predictions themselves, or other components can significantly enhance the performance of AI systems, as demonstrated by frameworks such as DSPy and TextGrad. However, optimizers built on language models themselves are usually designed by humans with manual design choices; optimizers themselves are not optimized. Moreover, these optimizers are general purpose by design, to be useful to a broad audience, and are not tailored for specific tasks. To address these challenges, we propose \ours{}, which focuses on designing a meta-optimizer to further enhance existing optimizers and align them to be good optimizers for a given task. Our approach consists of two key components: a meta prompt optimizer and a meta structure optimizer. The combination of these two significantly improves performance across multiple benchmarks, achieving an average absolute performance improvement of up to 6\% compared to the best baseline.
\end{abstract}

\section{Introduction}

Large language models~(LLMs) are increasingly used in learning algorithms, optimization, and evaluation tasks~\citep{yuksekgonul2024textgrad, zheng2024judging, yang2024large, khattab2024dspy, madaan2023selfrefineiterativerefinementselffeedback}. However, algorithms that incorporate LLMs often face significant challenges. Firstly, many of these algorithms are still hand-crafted to a considerable extent, requiring substantial human expertise and effort to design and implement effectively. Second, LLMs are notably sensitive to the specific wording and structure of their instructions~\citep{chern2024can}, making it tedious to improve them effectively to be used in learning algorithms.

\begin{figure}[t]
\vskip 0.2in
\begin{center}
\centerline{\includegraphics[width=\columnwidth]{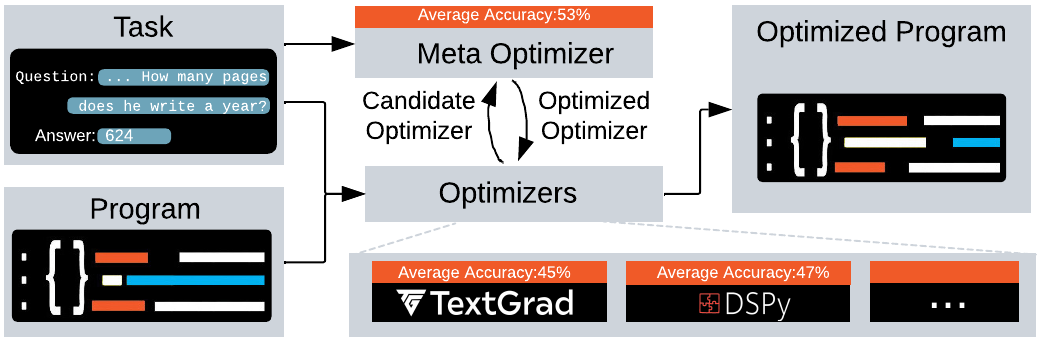}}
\caption{Illustration of the meta-optimization process. A meta-optimizer optimizes LLM optimizers by aligning them with specific tasks through task interaction while leveraging the strengths of different optimizers to propose a more effective optimizer.}
\label{fig:meta-optimization}
\end{center}
\vspace{-2.5em}
\end{figure}

Many studies have explored prompt optimization approaches to automatically design better prompts and enhance the performance of LLMs. For example, algorithms such as OPRO~\citep{yang2024large}, MIPRO~\citep{opsahlong2024optimizinginstructionsdemonstrationsmultistage}, and TextGrad~\citep{yuksekgonul2024textgrad} have introduced optimizers based on LLMs that support automatic prompt optimization.
However, these optimizers are often fixed, applying the same optimization strategies with the same optimizer prompts independent of the task, lacking a process to align with the specific task. Importantly, these optimizers are general purpose by design, to be useful to many different downstream tasks and be used by large user bases. Furthermore, different optimizers likely excel at different types of optimization tasks; thus, achieving the best of all optimizers introduces either a choice to be made among them or a strategy to ensemble them. There is no method to automatically design and improve the optimizers to call based on the characteristics of a task.

In this work, we explore how to automatically optimize both the structure and the prompt of optimizers. Specifically, we assume that all LLM calls are black-box calls, where the internal states of the LLM, as well as any information such as model gradients, are inaccessible, and only the inputs to and outputs of the LLM are observable. For the task to be optimized, we require only a small training dataset containing some input-output pairs and an evaluation metric the user wants to improve upon.

We propose using a meta-optimizer to automatically optimize the optimizer, with the objective of identifying improved optimizers such that the program optimized by it achieves the best performance on the given task. To this end, we introduce two types of meta-optimization strategies: the meta prompt optimizer and the meta structure optimizer. The meta prompt optimizer focuses on refining the prompts of the LLM optimizers to enhance their effectiveness and better align them with specific tasks. Meanwhile, the meta structure optimizer is designed to automatically determine the optimal combination and sequence of different optimizers or modules based on the characteristics of the task.

By combining these two types of meta-optimizers, we propose \ours{}, which integrates both components into a unified framework. Specifically, given a set of input optimizers, \ours{} first performs prompt optimization for each optimizer independently. Then, it explores the combination and sequencing of these optimizers to construct a more effective composite optimizer.
We conducted experiments on multiple benchmarks, and the results demonstrate that our meta-optimized optimizers consistently outperform the existing ones. 

Overall, we summarize our contributions as follows. First, we introduce the concept of meta-optimization, highlighting that existing LLM optimizers often require further task alignment and effective combination through a meta-optimizer to maximize their potential. Second, we develop two types of meta-optimizers: the meta prompt optimizer and the meta structure optimizer.  Building on these, we propose \ours, which integrates both types of meta-optimizers into a unified framework. Lastly, experimental results on multiple benchmarks demonstrate that our method significantly outperforms baseline approaches in both performance and generalization.
\section{Problem Statement}
Here, we will follow the notation from \citep{opsahlong2024optimizinginstructionsdemonstrationsmultistage}. Consider an LLM program $\Phi$ that may contain multiple LLM calls forming a pipeline, with each call using a different prompt. The pipeline structure of a program and the prompt corresponding to each LLM call can be learnable and optimized by an LLM optimizer.

The task of the LLM optimizer is to find the optimal program $\Phi$ given a training dataset $\mathcal{D}$  (with pairs of inputs $x$ and outputs $y$), and an evaluation metric $\mu$.
\begin{equation}
  \Phi^* =
  \argmax_{\Phi}
  \, \frac{1}{|\mathcal{D}|} \sum_{(x, y) \in \mathcal{D}}
  \mu(\Phi(x), y)
  \label{eq:prompt_components_new}
\end{equation}
Existing methods attempt to approximate solutions to this optimization problem from different perspectives. Optimizers such as MIPRO~\citep{opsahlong2024optimizinginstructionsdemonstrationsmultistage}, OPRO~\citep{yang-etal-2022-re3}, and TGD~\citep{yuksekgonul2024attention} aim to find better prompts, while optimizers like ADAS~\citep{hu2024automated} focus on exploring structures. We unify the existing optimizer algorithms into a general framework, as outlined in Algorithm~\ref{alg:inner_modified}. Importantly, these optimizers are the same type as of $M$, are designed by humans, and their structure and prompts are fixed.

Specifically, each optimizer should be capable of performing four operations: 

1. \textbf{Initialize}: Given the training dataset and the program to be optimized, prepare the training scheme.

2. \textbf{Propose}: Suggest improvements to the program, which may involve modifications to the pipeline structure, prompts, or other aspects.

 3.\textbf{Update}: Update the current optimal program based on the evaluation results of the improved program and determine the next proposal.

4. \textbf{ExtractOptimizedProgram}: Return the best program found so far.

\begin{algorithm}[ht]
\caption{Inner Loop: Optimize $\Phi$ with Optimizer $M$}
\label{alg:inner_modified}
\small
\begin{algorithmic}[1]
  \STATE \textbf{Input:} Optimizer $M$, Initial Program $\Phi$, Max Iterations $I$
  \STATE \textbf{Input:} Training Data $\mathcal{D}$, Validation Data $\mathcal{D}_{val}$, Metric $\mu$
  \STATE \textbf{Output:} $\Phi^*$, i.e., the optimized version of $\Phi$

  \STATE $M.\text{Initialize}(\mathcal{D}, \Phi)$

  \FOR{$k \gets 1 \textbf{ to } I$}
    \STATE $\Phi_{k} \gets M.\text{Propose}()$
    \STATE $\sigma \gets \frac{1}{|\mathcal{D}_{val}|} \sum_{(x, y) \in \mathcal{D}_{val}} \mu\!\bigl(\Phi_{k}(x), y\bigr)$
    \STATE $M.\text{Update}(\Phi_{k}, \sigma)$
  \ENDFOR

  \STATE $(\Phi^*, \sigma^*\bigr) \gets M.\text{ExtractOptimizedProgram}()$
  \STATE \textbf{return} $\bigl(\Phi^*, \sigma^*\bigr)$
\end{algorithmic}
\end{algorithm}

The problem we investigate is how to further enhance existing optimizers and align them to be effective optimizers for a given task, instead of relying on human-written optimizers. To this end, we introduce the concept of a \textit{meta-optimizer}. Let the optimized program obtained by an optimizer $M$ be denoted as $\Phi^* = M.\text{optimize}(\mathcal{D}, \Phi)$. The optimization objective of the meta-optimizer is to find improved optimizers such that the program optimized by this optimizer performs better on the corresponding task. In particular, the optimization problem is formalized as:
\begin{equation}
 M^* = 
  \argmax_{ M} 
  \, \frac{1}{|\mathcal{D}|} \sum_{(x, y) \in \mathcal{D}}
  \mu(M.\mathrm{optimize}(\mathcal{D}, \Phi)(x), y).
  \label{eq:prompt_components_meta}
\end{equation}
In optimization, good initializations often contribute to improved performance, in continuous or discrete optimization alike~\citep{li2020influence, sutskever2013importance}. Ideally, the meta-optimization framework should leverage the existing, manually designed optimizers and achieve further improvements. Therefore, the input to the meta-optimizer can include one or more existing optimizers as initialization, ultimately producing an optimized optimizer. The detailed process is illustrated in Algorithm~\ref{alg:outer_multiple_optimizers}.

\begin{algorithm}[ht]
\caption{Meta-Optimization of Optimizers}
\label{alg:outer_multiple_optimizers}
\small
\begin{algorithmic}[1]
 \STATE \textbf{Input:} Meta-Optimizer $\widehat{M}$
 \STATE \textbf{Input:} Max Meta-Iterations $J$, Max Inner-Iterations $I$
  \STATE \textbf{Input:} Initial optimizers $ \{M^{(1)}, M^{(2)}, \dots,$ $M^{(r)}\}$ 
  \STATE \textbf{Input:} Training Data $\mathcal{D}$, Validation Data $\mathcal{D}_{val}$
  \STATE \textbf{Input:} Metric $\mu$, Initial Program $\Phi$
  \STATE \textbf{Output:} Optimized optimizer $M^*$

  \STATE $\widehat{M}.\text{Initialize}(\mathcal{D}, \{M^{(i)}\}_{i=1}^r)$

  \FOR{$j \gets 1 \textbf{ to } J$}
    \STATE $M_j \gets \widehat{M}.\text{Propose}()$
    \STATE $(\Phi^*_j, \sigma_j) \gets \text{InnerLoop}\bigl(M_j, \Phi, I, \mathcal{D}, \mathcal{D}_{val}, \mu)$
    \STATE $\widehat{M}.\text{Update}\bigl(M_j, \sigma_j\bigr)$
  \ENDFOR

  \STATE $M^* \gets \widehat{M}.\text{ExtractOptimizedOptimizer}()$

  \STATE \textbf{return} $M^*$
\end{algorithmic}
\end{algorithm}

Specifically, the meta-optimizer combines different input optimizers to propose new optimizers. The newly proposed optimizers execute the process described in Algorithm~\ref{alg:inner_modified} to obtain the \emph{inner loop} optimization results. Based on the performance of the newly designed optimizers, the meta-optimizer updates the current best optimizer and determines the proposal for the next iteration~(\emph{outer loop}).

However, similar to the problem of optimizing a program, finding an optimal optimizer using a meta-optimizer is generally an intractable optimization problem. Therefore, it is necessary to introduce appropriate parameterizations and simplifications to the problem. In our work, we primarily focus on two types of parameterizations for the meta-optimizer: 

1. automatically optimizing the prompts in the optimizer to align it with the given task~(\emph{Meta Prompt Optimizer}),

 2. automatically optimizing the combination of different optimizers based on the characteristics of the task, forming a new composite optimizer~(\emph{Meta Structure Optimizer}).

Collectively, we can meta-optimize using both parameterizations to obtain better optimizers.
\section{\ours: Automatically Optimizing Language Model Optimizers}
\label{sec:method}
In this section, we first introduce the theoretical insights behind the meta optimizer. Next, we further analyze and illustrate our motivation through concrete examples. Finally, we present the \ours{} pipeline, which consists of the meta prompt optimizer and the meta structure optimizer.

\subsection{Theoretical Insight}
\label{sec:motivation}
We present the theoretical motivation for meta optimization, highlighting the importance of aligning the optimizer with the target task. In particular, it is shown that when both the training and test sets are sampled from the same underlying data distribution, an optimizer properly aligned via meta-learning on the training set will, with high probability, produce programs on the test set whose accuracy closely approaches that of the optimal optimizer. In contrast, an optimizer that has not been optimized on the training set lacks such theoretical guarantee.

We begin by introducing the necessary notation. Let $\mu: \mathcal{X} \to [0,1]$ denote the accuracy metric, where $\mathcal{X}$ is the space of natural language outputs. Let $D$ be a distribution over natural language inputs $x$. Let $\Phi_{\theta}$ be an optimizer parameterized by $\theta \in \Theta$. Given input $x$ and access to $\mu$, $\Phi_{\theta}$ makes $T$ zeroth-order queries and returns an optimized output $x_T$. 

Define the loss of the optimizer on input $x$ as
\(
L(\theta, x) = 1 - \mu(x_T) = 1 - \mu(\Phi_{\theta}(x, T)).
\)
The population loss is given by \(R(\theta) = \mathbb{E}_{x \sim D}[L(\theta, x)].
\)
Let $S = \{x_1, \dots, x_n\} \sim D^n$ denote a dataset sampled from $D$, and define the empirical loss on $S$ as
\(
R_S(\theta) = \frac{1}{n} \sum_{i=1}^n L(\theta, x_i).
\)

Let $\widehat{\theta}_{S} = \arg\min_{\theta \in \Theta} R_{S}(\theta)$ be the optimizer parameters obtained via meta-learning on the training set $S$, and let $\theta^{\ast} = \arg\min_{\theta \in \Theta} R(\theta)$ be the globally optimal optimizer under distribution $D$. 
We can now state the following theorem:
\begin{theorem}
\label{the:theo}
Let $S_1$ and $S_2$ be two datasets sampled independently from distribution $D$, with sizes $n$ and $m$, respectively. Then, with probability at least $1 - \delta$, the optimizer $\widehat{\theta}_{S_1}$ trained on $S_1$ satisfies:
\begin{equation}
R_{S_{2}}(\widehat{\theta})\le R(\theta^{\ast})+\sqrt{\tfrac{2\log(6/\delta)}{n}}+\sqrt{\tfrac{\log(6/\delta)}{2m}}.
\end{equation}
\end{theorem}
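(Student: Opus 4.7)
The plan is a standard four-term excess-risk decomposition combined with Hoeffding's inequality and a union bound. Writing $\widehat\theta$ for $\widehat\theta_{S_1}$, I start from the identity
\[
R_{S_2}(\widehat\theta) - R(\theta^*) = \underbrace{[R_{S_2}(\widehat\theta) - R(\widehat\theta)]}_{(A)} + \underbrace{[R(\widehat\theta) - R_{S_1}(\widehat\theta)]}_{(B)} + \underbrace{[R_{S_1}(\widehat\theta) - R_{S_1}(\theta^*)]}_{(C)} + \underbrace{[R_{S_1}(\theta^*) - R(\theta^*)]}_{(D)}.
\]
Term $(C)$ is at most $0$ by the very definition of $\widehat\theta$ as the empirical risk minimizer on $S_1$, so it drops out. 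The remaining three terms are each a difference between a population mean and an empirical mean of the bounded random variable $L(\theta, x) \in [0,1]$, so each is a candidate for Hoeffding-type concentration; I split the failure budget $\delta$ evenly across the three terms.

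For term $(A)$, since $\widehat\theta$ is a function of $S_1$ it is independent of $S_2$, so conditional on $\widehat\theta$ the quantity $R_{S_2}(\widehat\theta)$ is an average of $m$ i.i.d.\ bounded random variables with mean $R(\widehat\theta)$; a direct Hoeffding bound gives $(A) \le \sqrt{\log(6/\delta)/(2m)}$ with probability $\ge 1-\delta/3$. For term $(D)$, $\theta^*$ is deterministic and independent of $S_1$, and the same single-point Hoeffding on $S_1$ yields $(D) \le \sqrt{\log(6/\delta)/(2n)}$ with probability $\ge 1-\delta/3$. Term $(B)$ is the main obstacle, because $\widehat\theta$ depends on $S_1$ and a naive Hoeffding is not valid; I would pass to a uniform bound $(B) \le \sup_{\theta\in\Theta}[R(\theta)-R_{S_1}(\theta)]$ and then invoke a uniform concentration result appropriate to the parameter space — a union bound over $\Theta$ when it is effectively discrete, as is the case for the finitely many optimizers proposed across the iterations of Algorithm~\ref{alg:outer_multiple_optimizers}, or a Rademacher/McDiarmid argument otherwise — to obtain $(B) \le \sqrt{\log(6/\delta)/(2n)}$ with probability $\ge 1-\delta/3$.

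A union bound over the three events gives the combined guarantee with probability $\ge 1-\delta$, and summing the three contributions yields
\[
(A) + (B) + (D) \le 2\sqrt{\tfrac{\log(6/\delta)}{2n}} + \sqrt{\tfrac{\log(6/\delta)}{2m}} = \sqrt{\tfrac{2\log(6/\delta)}{n}} + \sqrt{\tfrac{\log(6/\delta)}{2m}},
\]
which reproduces the stated bound once $(C)\le 0$ is used. The only non-routine step is $(B)$: controlling the empirical process over all possible output parameters of meta-learning is precisely where one has to commit to a structural assumption on $\Theta$ (or on how $\widehat{M}$ explores it) in order for the stated form of the bound — with no explicit complexity term — to hold.
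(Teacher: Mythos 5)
Your decomposition and bounding strategy match the paper's proof step for step: the paper chains
\begin{equation*}
R_{S_2}(\widehat{\theta}) \le R(\widehat{\theta}) + \varepsilon_m(\tfrac{\delta}{3}) \le R_{S_1}(\widehat{\theta}) + \varepsilon_n(\tfrac{\delta}{3}) + \varepsilon_m(\tfrac{\delta}{3}) \le R_{S_1}(\theta^{\ast}) + \varepsilon_n(\tfrac{\delta}{3}) + \varepsilon_m(\tfrac{\delta}{3}) \le R(\theta^{\ast}) + 2\varepsilon_n(\tfrac{\delta}{3}) + \varepsilon_m(\tfrac{\delta}{3}),
\end{equation*}
which is exactly your terms $(A)+(B)+(C)+(D)$ with $(C)\le 0$ by the ERM property, the Hoeffding deviations $\varepsilon_n(\delta)=\sqrt{\log(2/\delta)/(2n)}$, a three-way split of the failure probability, and the identity $2\sqrt{\log(6/\delta)/(2n)}=\sqrt{2\log(6/\delta)/n}$. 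The one substantive point where you go further is your treatment of $(B)$. You correctly flag that applying a pointwise Hoeffding bound to $\widehat{\theta}$ is not directly valid because $\widehat{\theta}$ is a function of $S_1$, and that one needs either a uniform deviation bound over $\Theta$ (with the attendant complexity term) or a restriction such as finiteness of the candidate set. The paper's proof does not make this distinction: it simply applies the fixed-$\theta$ inequality to $\widehat{\theta}$ in the step $R(\widehat{\theta}) \le R_{S_1}(\widehat{\theta}) + \varepsilon_n(\delta/3)$, leaving the uniform-convergence issue implicit. So your version is the same argument read more carefully, and the caveat you raise is a genuine gap in the paper's proof as written rather than a defect in your own.
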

The proof of Theorem~\ref{the:theo} is based on Hoeffding’s inequality, and the full derivation is provided in Appendix~\ref{app:proof}. The theorem highlights the necessity of performing meta-optimization. 

In contrast, an optimizer $\theta_0$ that hasn't been optimized on the training dataset has no theoretical guarantee.
We can still guarantee that $R_{S_2}(\theta_0)$ is similar to $R(\theta_0)$, but the guarantee says nothing about how large $R(\theta_0)$ is.
If the optimizer is bad on average for this task, it will still be bad on $S_2$ with high probability. This provides a theoretical foundation for the design of \ours{}.

\begin{figure*}[t]
\begin{center}
\vspace{-0.5em}
\centerline{\includegraphics[width=\textwidth]{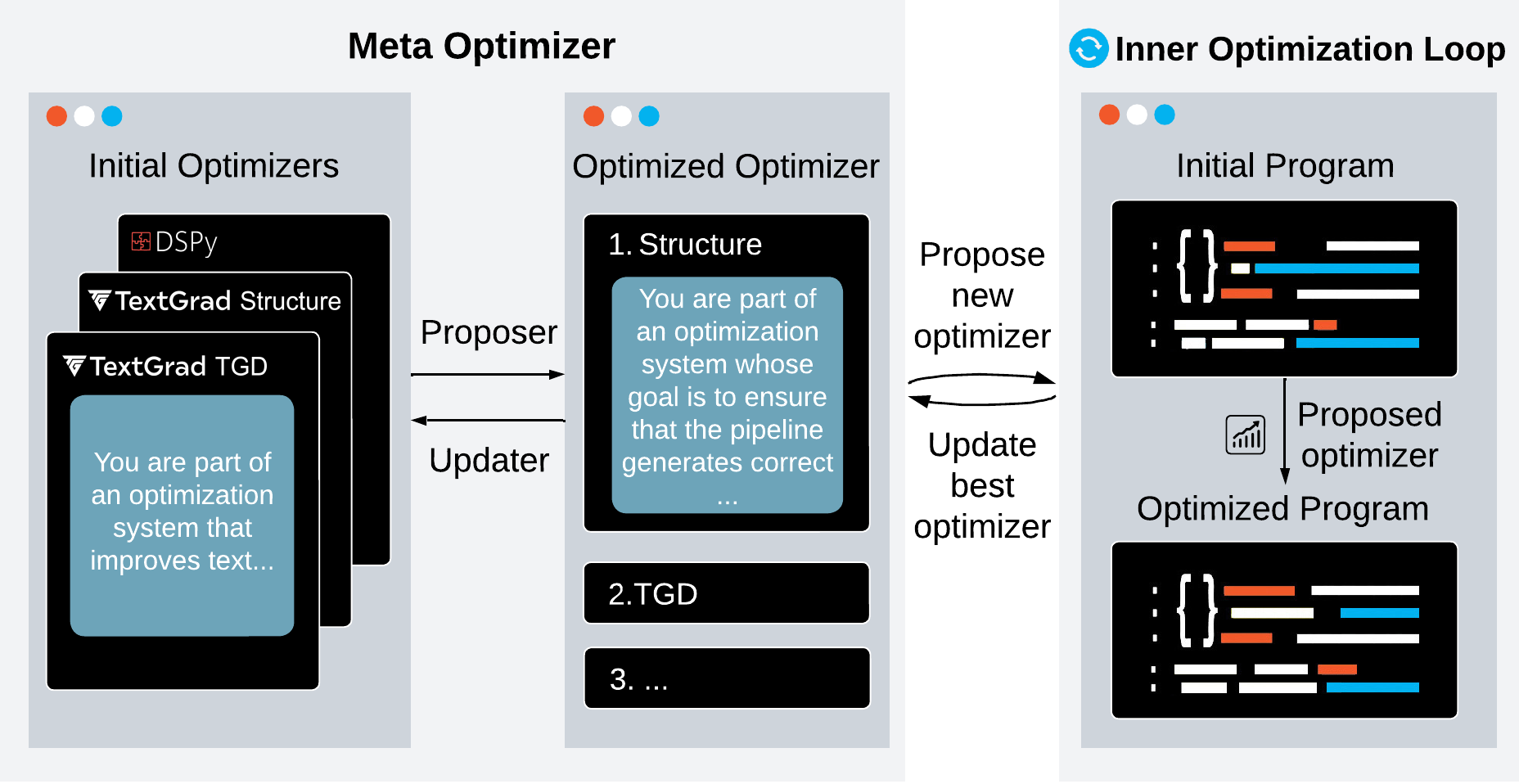}}
\vspace{-0.5em}
\caption{Illustration of \ours. \ours \ combines a meta prompt optimizer and a meta structure optimizer. Given a set of optimizers, \ours \ performs optimization in two steps. First, it individually refines each optimizer by optimizing its prompts to better align with the task. Then, it combines the different prompt-optimized optimizers to construct the final optimizer.}
\label{fig:meta_optimizer}
\end{center}
\vspace{-2.5em}
\end{figure*}

\subsection{Motivating Example}

Existing optimizers such as TextGrad and DSPy are manually designed by humans with the goal of performing well across a broad distribution of tasks, and they indeed demonstrate strong average performance.

For example, the prompt used by the TextGrad TGD optimizer is as follows:

\begin{apxtcolorbox}[TextGrad TGD optimizer Prompt]
You are part of an optimization system that improves text (i.e., variable). You will be asked to creatively and critically improve prompts, solutions to problems, code, or any other text-based variable. 
\end{apxtcolorbox}

As can be seen, this prompt is indeed highly general, with phrasing such as \textit{`improve prompts, solutions to problems, code, or any other text-based variable'}.

However, in some cases, such general-purpose prompt fails to effectively optimize model performance. While the optimizers in both TextGrad and DSPy can obtain learning signals from task-specific evaluators, these signals tend to be noisy and sparse.

First, when feedback is provided solely in the form of scalar scores rather than textual guidance, it becomes sparse, making optimization substantially more challenging. Second, although optimizers such as TextGrad's TGD can accept textual gradient feedback, such feedback is often highly noisy, making it difficult for the optimizer to learn effectively. For example, the evaluator feedback received by the TGD optimizer may look like the following:

\begin{apxtcolorbox}[Feedback Received by the TGD optimizer]
To improve the prompt for the executer and enhance the objective function, consider the following feedback: 1. **Explicit Criteria Definition**: The prompt should explicitly instruct the executer to define the criteria for optical activity at the beginning of the response. This can prevent ambiguity and ensure that the executer uses the correct scientific principles. For example, the prompt could include a directive to "List the criteria for optical activity before analyzing each compound." 2. **Data Verification Directive**: Incorporate a step in the prompt that requires the executer to verify the input data against reliable sources. This could be phrased as "Cross-check the properties of each compound with a trusted chemical database before proceeding with the analysis." 3. **Structured Logical Reasoning**: Encourage a structured approach to reasoning by breaking down the analysis into distinct steps. The prompt could suggest a format like "For each compound, first identify chiral centers, then assess symmetry, and finally determine optical activity".
\end{apxtcolorbox}

\begin{apxtcolorbox}[Feedback Received by the TGD optimizer (continued)]
 4. **Cross-Referencing Encouragement**:  ... By incorporating these elements into the prompt, the executer can be guided to produce more accurate and reliable responses, thereby improving alignment with the ground truth answer and enhancing the objective function.
\end{apxtcolorbox}

Here, the evaluator feedback includes suggestions such as \textit{`List the criteria for optical activity before analyzing each compound'}, which are specific to a single problem instance. Such feedback is clearly noisy. Since a generic optimizer relies solely on feedback to guide its updates, it is likely to incorporate such suggestions into the optimized LLM program prompt, which can be detrimental to the overall performance of the program.

Yet, in reality, we can choose to let the LLM optimizer adapt to a specific task distribution in order to achieve better performance. This is because if the distribution of programs generated by the task-specific optimizer is more aligned with the task requirements, the difficulty of finding the optimal program will be significantly reduced, and the impact of noise in the evaluator’s signal will be greatly mitigated, even when the feedback is noisy.

We illustrate this using the BBH Dyck Languages task as an example, showing that aligning the LLM optimizer to a specific task distribution can lead to improved performance. For instance, consider the following task-specific optimizer prompt:

\begin{apxtcolorbox}[A Task-specific Optimizer Prompt]
You are part of an optimization system specialized in improving prompts for bracket matching and sequence completion tasks. Your role is to enhance prompts that help solve Dyck language problems, which involve proper nesting and closure of different types of brackets ({}, <>, ()). When improving prompts, focus on these critical aspects: (1) maintaining accurate bracket pair matching, (2) preserving the LIFO (Last In First Out) order of nested structures, (3) handling multiple bracket types simultaneously, and (4) ensuring complete closure of all open brackets. You should critically analyze how the prompts can better guide the model to track open brackets, maintain proper nesting order, and systematically complete sequences. Consider incorporating pattern recognition strategies and explicit validation rules in the improved prompts. Your improvements should lead to more reliable and accurate bracket sequence completions. 
\end{apxtcolorbox}

It can be observed that the task-specific optimizer leads to a shift in the distribution of LLM programs it tends to optimize, making it more likely to generate content that aligns with key task requirements, such as producing programs that satisfy the requirement of preserving the LIFO (Last In First Out) order, etc. As a result, even if the evaluator feedback is somewhat noisy or sparse, the LLM program optimized by the optimizer can still perform well, and is more likely to generate critical statements such as: Explicitly push each opening symbol onto the stack and pop it when a corresponding closing symbol is encountered. After processing each symbol, describe the current state of the stack, focusing on unmatched opening symbols. In contrast, if a generic optimizer is used, it becomes difficult to generate effective prompts under noisy or sparse feedback conditions.

This demonstrates that adapting to a new task distribution is meaningful. To avoid adapting to each new task distribution by hand, we meta-learn how to adapt, which is the core motivation of \ours{}.

\subsection{Pipeline}

\subsubsection{Meta Prompt Optimizer}
LLM optimizers are usually designed by humans with manual design choices. As a result, the optimizers themselves are not optimized or aligned with a given task. We aim to optimize the prompts of LLM optimizers to further enhance their effectiveness and align them with tasks.

Below is a brief introduction to the implementation method of the meta prompt optimizer. The detailed pseudocode can be found in Appendix \ref{app:mpo}.
To \textbf{Initialize}, the meta prompt optimizer runs a round of optimization on the validation dataset to evaluate and record the initial performance of the optimizer.
To \textbf{Propose}, the meta prompt optimizer randomly samples data examples from the training dataset and analyzes the general characteristics of the task type. Based on the current best optimizer prompt, it proposes an improved prompt that is more aligned with the task.
To \textbf{Update}, the optimized optimizer undergoes an inner loop optimization test on the validation dataset. If the test results outperform those of the previously best optimizer, the optimizer is updated.
To \textbf{ExtractOptimizedOptimizer}, the meta prompt optimizer returns the best optimizer learned so far.
Among these steps, \textbf{Propose} is the core of the meta prompt optimizer and the only stage where LLM calls are invoked. For detailed prompts, refer to Appendix \ref{app:mpop}.

\subsubsection{Meta Structure Optimizer}
A variety of LLM optimizers have been proposed to optimize program structures, prompts, and other components. The meta structure optimizer is designed to automatically optimize the combination and ordering of different optimizers based on the characteristics of the task.

Below, we briefly introduce the working principles of the meta structure optimizer. The detailed pseudocode implementation can be found in Appendix~\ref{app:mso}.
To \textbf{Initialize}, the meta structure optimizer runs one round of optimization for each input optimizer on the validation dataset, selects the highest score and the corresponding optimizer as the initial best value.
To \textbf{Propose}, we provide the meta structure optimizer with a set of reference optimizers. If previously optimized, better-performing optimizers exist, they are also included. Based on this, the meta structure optimizer integrates and proposes an improved optimizer.
To \textbf{Update}, the meta structure optimizer evaluates whether the proposed optimizer shows improved performance on the validation dataset and updates the current best optimizer and score. 
To \textbf{ExtractOptimizedOptimizer}, the meta structure optimizer returns the best optimizer learned so far.
The \textbf{Propose} stage is the only stage with LLM calls. For detailed prompts, please refer to Appendix \ref{app:msop}.

\subsubsection{\ours}
In the previous two sections, we introduced the meta prompt optimizer and the meta structure optimizer. \ours{} is composed of these two components. Specifically, after receiving a set of input optimizers, \ours{} first performs prompt optimization for each optimizer individually and then explores the combination of different optimizers to form a better composite optimizer.

\section{Experiment}
\label{sec:exp}
\subsection{Experimental Setup}  
In this section, we use the existing optimizers from DSPy and TextGrad as baseline methods. We evaluate our approach and the baselines on multiple benchmarks, including BBH~\citep{suzgun-etal-2023-challenging, srivastava2023imitationgamequantifyingextrapolating}, MMLU~\citep{hendrycks2021measuringmassivemultitasklanguage}, and GPQA~\citep{rein2023gpqagraduatelevelgoogleproofqa}. 
To ensure reproducibility, we provide the prompts and structures of the learned programs in Appendix \ref{app:learned}.

\textbf{Baselines.}  
We used zero-shot CoT, few-shot CoT~\citep{wei2023chainofthoughtpromptingelicitsreasoning}, self-consistency~\citep{selfconsistency}, best-of-N~\citep{bon,webb2003physical}, MIPROv2 ~\citep{opsahlong2024optimizinginstructionsdemonstrationsmultistage}, TextGrad TGD~\citep{yuksekgonul2024textgrad} and ADAS-TG~\citep{hu2024automated} as baselines. Zero-shot CoT relies on direct chain-of-thought reasoning, whereas MIPRO and TextGrad’s TGD optimizer refine the program’s prompt. ADAS is an algorithm for automatically searching and optimizing the program’s structure. We implemented this algorithm within TextGrad as a baseline, referred to as ADAS-TG.

\textbf{Benchmarks.}  
We evaluate our method on four widely used, challenging, and diverse benchmarks: BBH Word Sorting, BBH Dyck Languages~\citep{suzgun-etal-2023-challenging, srivastava2023imitationgamequantifyingextrapolating}, MMLU Abstract Algebra~\citep{hendrycks2021measuringmassivemultitasklanguage}, and GPQA Diamond~\citep{rein2023gpqagraduatelevelgoogleproofqa}. For detailed settings, please refer to Appendix~\ref{app:bench}.  
Due to the non-determinism of LLM APIs~\citep{non_determinism_gpt4_2023}, the test accuracy for each benchmark is averaged over five random seeds.

In our experiments, we consider three different levels of LLM calls:  
(1) LLM calls within the program itself,  
(2) LLM calls made by the optimizer while refining the program, and  
(3) LLM calls made by the meta-optimizer when optimizing the optimizer.  
As shown in Section \ref{exp:ana_cos}, the frequency of these calls decreases significantly across these levels.
This hierarchical structure allows for cost-effective resource allocation: the program should use a relatively economical model, the optimizer can leverage a more capable model, and the meta-optimizer should employ the best available model. Consequently, in our experiments, we use GPT-4o-mini for LLM calls within the program, GPT-4o for the MIPROv2 and TGD optimizers, and the o1 model for the structure optimizer and the meta-optimizers.
\subsection{Main Results}

\begin{table*}[t]
\centering
\small       
\setlength{\tabcolsep}{3pt}
\begin{tabular*}{\linewidth}{@{\extracolsep{\fill}}lcccccccccc}
\toprule
\textbf{Method} &
\multicolumn{2}{c}{\textbf{Word Sorting}} &
\multicolumn{2}{c}{\textbf{Dyck Languages}} &
\multicolumn{2}{c}{\textbf{GPQA Diamond}} &
\multicolumn{2}{c}{\textbf{Abstract Algebra}} &
\multicolumn{2}{c}{\textbf{Average}}\\
\cmidrule(lr){2-3} \cmidrule(lr){4-5} \cmidrule(lr){6-7} \cmidrule(lr){8-9} \cmidrule(lr){10-11}
 & \textbf{Val} & \textbf{Test}  & \textbf{Val} & \textbf{Test}
 & \textbf{Val} & \textbf{Test}  & \textbf{Val} & \textbf{Test}
 &  \textbf{Val} & \textbf{Test} \\
\midrule
\multicolumn{11}{l}{\textbf{Vanilla prompting methods}}\\
\midrule
Zero-shot CoT        & 0.46 & 0.55 & 0.06 & 0.05 & 0.32 & 0.34 & 0.74 & 0.70 & 0.40 & 0.41\\
8-shot CoT           & 0.50 & 0.52 & 0.14 & 0.19 & 0.32 & 0.35 &  0.65  &  0.71 & 0.40 & 0.44 \\
Self-consistency (8) & 0.47 & 0.52 & 0.10 & 0.12  & 0.40 & \textbf{0.42} &   0.76   &  0.70    & 0.43 & 0.44\\
Best of N (8)        & 0.48 & 0.52 &  0.14 & 0.17 & 0.37 & \underline{0.40} &  \underline{0.77}    &    \underline{0.74}  & 0.44 & 0.46 \\
\midrule
\multicolumn{11}{l}{\textbf{TextGrad optimizers}}\\
\midrule
TGD Optimizer &  0.54 & 0.55 & 0.10 & 0.10 & 0.34 & 0.35 & 0.76 & 0.71 & 0.44 & 0.43\\
ADAS-TG       & \underline{0.58} & \underline{0.58} & 0.21 & 0.16 & 0.36 & 0.37 & 0.75 & 0.70 & 0.48 & 0.45\\
\midrule
\multicolumn{11}{l}{\textbf{DSPy optimizers}}\\
\midrule
Zero-shot MIPROv2 & 0.57 & 0.55 & 0.19 & 0.16 & \underline{0.43} & 0.38 & 0.76 & \textbf{0.77} & \underline{0.49} & \underline{0.47} \\
8-shot MIPROv2    & 0.52 & 0.57 & \underline{0.33} & \underline{0.26} & 0.37 & 0.34 & 0.74 & 0.65 & \underline{0.49} & 0.46\\
\midrule
\multicolumn{11}{l}{\textbf{Meta-optimized optimizers}}\\
\midrule
\ours & \textbf{0.60} & \textbf{0.65} & \textbf{0.42} & \textbf{0.37} &
        \textbf{0.45} & \underline{0.40} & \textbf{0.78} & 0.71 &
        \textbf{0.56} & \textbf{0.53}\\
\bottomrule
\end{tabular*}
\caption{Accuracy (\%) of GPT-4o-mini on benchmarks. Bold indicates the best result, and underlined text represents the second-best.}
\label{tab:exp}
\vspace{-1em}
\end{table*}

As shown in Table~\ref{tab:exp}, we achieve up to an 11\% absolute performance improvement across these datasets, with an average performance significantly surpassing existing optimizers. Our method outperforms both TGD and ADAS-TG, which serve as the base candidates for \ours, across all benchmarks and achieves the best performance on most of them.
It's worth noting that the programs optimized by the meta-optimized optimizers also exhibit interesting properties.

(1) The optimized optimizer is \textbf{more aligned with specific tasks}. For example, in the BBH Dyck Languages task, the generated program includes components such as n \textit{type analyzer}, and a \textit{stack validator}, which closely match the nature of the task. In contrast, an unaligned optimizer tends to propose more generic and broadly applicable structures.

(2)  The optimized optimizer is \textbf{more effective in handling finer details}. For instance, in multi-step LLM calls, passing both the overall problem and subproblems to each LLM call helps maintain a global understanding throughout the process. This behavior is more frequently observed in the optimized optimizer, whereas the initial optimizer tends to pass only the subproblems to each subpart.

(3) The meta-optimized optimizer \textbf{generally improves efficiency}. Although we allocate six optimization steps per training epoch, we observe that meta-optimized optimizers, due to their stronger task alignment, often achieve significant improvements within the first 1-2 steps. In contrast, other optimizers show more gradual improvements.

Please refer to Appendix~\ref{app:learned} for the optimized programs and Appendix ~\ref{app:optim} for the optimized optimizers.

\subsection{Cost analysis}
\label{exp:ana_cos}
In this section, we examine the trade-off between effectiveness and computational cost. 

First, we analyze the token usage at different levels within a single epoch of meta optimization. This analysis supports our design choice of using models with different capabilities at different levels. As shown in Table~\ref{tab:overhead}, the token usage per optimization epoch on the MMLU Abstract Algebra dataset reveals that higher-level components require significantly fewer tokens than lower-level ones. This justifies our hierarchical design. For comparison, a single round of zero-shot CoT requires approximately 140k tokens, indicating that the overall token consumption of our optimization pipeline remains within a reasonable and practical range.

\begin{table}[h]
\centering
\begin{minipage}[t]{0.48\linewidth}
\centering
\begin{tabular*}{\linewidth}{@{\extracolsep{\fill}}l|ccc@{}}
\toprule
\textbf{Level} & \textbf{Tokens} \\
\midrule
Program level & $\sim$ 400k\\
Optimizer level & $\sim$ 100k \\
Meta-optimizer level & $\sim$ 2.5k \\
\bottomrule
\end{tabular*}
\caption{Token analysis on Abstract Algebra.}
\label{tab:overhead}
\end{minipage}
\hfill
\begin{minipage}[t]{0.48\linewidth}
\centering
\begin{tabular*}{\linewidth}{@{\extracolsep{\fill}}l|ccc@{}}
\toprule
\textbf{Model} & \textbf{Performance} & \textbf{Cost} \\
\midrule
0-shot CoT (4o-mini) & 0.05 & 0.14\$ \\
Ours (4o-mini) & 0.37 & 0.44\$ \\
0-shot CoT (4o) & 0.18 & 0.52\$ \\
\bottomrule
\end{tabular*}
\caption{Cost analysis on Dyck Languages.}
\label{tab:model_comparison}
\end{minipage}
\vspace{-1em}
\end{table}

In addition, we evaluate the cost and performance of the zero-shot CoT approach using both GPT-4o-mini and GPT-4o on the BBH Dyck Languages dataset, and compare them to our optimized approach applied to GPT-4o-mini. As shown in Table~\ref{tab:model_comparison}, our method achieves the best performance on BBH Dyck Languages while incurring a lower cost than GPT-4o. This demonstrates that with effective prompt and structure optimization, a smaller model can outperform the zero-shot performance of a larger model. These findings highlight the practical applicability and scalability of our approach.

\subsection{Transferability of the optimized optimizer across models and datasets}
In this section, we evaluate the transferability of our optimized optimizer across different language models and datasets. As shown in Table~\ref{tab:model_gen}, our method trained on GPT-4o-mini achieves superior performance compared to unoptimized baselines when evaluated on Claude 3 Haiku. Furthermore, as illustrated in Table~\ref{tab:data_gen}, our optimizer trained on the GPQA diamond dataset also transfers effectively to the Abstract Algebra dataset. These results demonstrate that our meta-optimized optimizer exhibits strong transferability across models and datasets.

\begin{table}[h]
\centering
\begin{minipage}[t]{0.48\linewidth}
\centering
\begin{tabular*}{\linewidth}{@{\extracolsep{\fill}}l|cc@{}}
\toprule
\textbf{Method (Claude 3 Haiku)} & \multicolumn{2}{c}{\textbf{Dyck Languages}}  \\
\cmidrule(lr){2-3}
 & \textbf{Val} & \textbf{Test}  \\
\midrule
Zero-shot CoT & 0.07  & 0.10 \\
\midrule
\multicolumn{3}{l}{\textbf{TextGrad optimizers}} \\
\midrule
TGD Optimizer & 0.10 & 0.04\\
ADAS-TG & 0.35 & 0.34 \\
\midrule
\multicolumn{3}{l}{\textbf{Optimizers optimized on GPT-4o-mini}} \\
\midrule
\ours \ & 0.32 &  0.35   \\
\bottomrule
\end{tabular*}
\caption{Transferability of the optimized optimizer across models.}
\label{tab:model_gen}
\end{minipage}
\hfill
\begin{minipage}[t]{0.48\linewidth}
\centering
\begin{tabular}{p{3.2cm}|cc}
\toprule
\textbf{Method} & \multicolumn{2}{c}{\textbf{Abstract Algebra}}  \\
\cmidrule(lr){2-3}
 & \textbf{Val} &  \textbf{Test}  \\
\midrule
Zero-shot CoT & 0.74  & 0.70 \\
\midrule
\multicolumn{3}{l}{\textbf{TextGrad optimizers}} \\
\midrule
TGD Optimizer & 0.76 & 0.71\\
ADAS-TG & 0.75 & 0.70 \\
\midrule
\multicolumn{3}{l}{\textbf{Optimizers optimized on GPQA diamond}} \\
\midrule
\ours \ & 0.78 &  0.77   \\
\bottomrule
\end{tabular}
\caption{Transferability of the optimized optimizer across datasets.}
\label{tab:data_gen}
\end{minipage}
\end{table}
\vspace{-1em}
\subsection{Analysis of the effectiveness of each meta optimizer}
In this section, we analyze the contributions of different components of the proposed meta optimizer. As shown in Table~\ref{tab:ana}, we find that all meta optimizers improve performance on the BBH Dyck Languages benchmark. Optimizers optimized using either method outperform the original optimizer. Among them, the meta prompt optimizer achieves the best improvement when applied to ADAS-TG.  

\begin{table}[h]
\centering
\begin{tabular}{l@{\hskip 5pt}|@{\hskip 5pt}c@{\hskip 5pt}c@{\hskip 5pt}c@{\hskip 5pt}c@{\hskip 5pt}c@{\hskip 5pt}c@{\hskip 5pt}c}
\toprule
\textbf{Split} & \textbf{0-shot CoT} & \textbf{TGD} & \textbf{ADAS-TG} & \textbf{TGD (O)} & \textbf{ADAS-TG (O)} & \textbf{Struct (O)} & \textbf{\ours} \\
\midrule
Val & 0.06 & 0.10 & 0.21 & 0.21 & 0.42 & 0.24 & 0.42 \\
Test & 0.05 & 0.10 & 0.16 & 0.24 & 0.37 & 0.16 & 0.37 \\
\bottomrule
\end{tabular}
\vspace{0.4em}
\caption{Analysis of the effectiveness of each meta optimizer on Dyck Languages. TGD (O), ADAS-TG (O), and Struct (O) respectively denote the TGD and ADAS-TG optimizers enhanced by the meta prompt optimizer, and the optimizers enhanced by the meta structure optimizer.}
\label{tab:ana}
\end{table}

Here, \ours{} produces the same results as the optimized ADAS-TG because the meta optimizer did not find a better option during the meta structure optimization. So, the optimized ADAS-TG was selected as the best result.
Notably, the best meta optimizer varies across benchmarks due to task-specific differences. Despite this, all meta optimizers can effectively enhance the performance of a given optimizer.  

\newpage
\section{Related Work}
\subsection{Prompt optimization}
Prompt optimization has proven crucial for improving LLM performance. Initial strategies, such as few-shot learning and in-context learning, demonstrated careful prompt design could significantly boost LLM performance~\citep{nori2023can}. Techniques like chain-of-thought reasoning ~\citep{wei2023chainofthoughtpromptingelicitsreasoning} and ensemble methods ~\citep{lu2024mergeensemblecooperatesurvey} also emerged as popular ways to structure prompts for effective problem-solving.
However, hand-crafted approaches are limited in their utility. Efforts to automate prompt optimization have led to the development of gradient-based approaches~\citep{shin-etal-2020-autoprompt, chen2022knowprompt}. These methods, however, require access to model parameters, which limits their application to open-source models. 

To address these constraints, alternative approaches have been proposed that leverage LLMs themselves as prompt optimizers.  
APE~\citep{zhou2022large} was among the first to introduce the concept of automatic instruction generation and selection. \citep{ye2023prompt} introduced the concept of \textit{meta prompt}, demonstrating systematically designing meta-prompts can improve prompt quality. DSPy~\citep{khattab2024dspy} proposed a systematic approach for optimizing LLM programs, integrating structured optimization techniques.  

Another line of research explores optimization methods based on textual feedback. ProTeGi~\citep{pryzant-etal-2023-automatic} first introduced the concept of textual gradients, highlighting that LLMs themselves can serve as a form of a loss function to guide the improvement of LLM programs. Building on this idea, TextGrad~\citep{yuksekgonul2024textgrad} developed a structured textual gradient descent framework, demonstrating its applicability and effectiveness across multiple disciplines and domains.  Concurrently, OPTO~\cite{opto} proposed a related approach in which the optimizer receives an execution trace alongside feedback on the generated output. Semantic gradient descent~\citep{seback}  refines textual gradient descent by enhancing feedback signals.

However, the LLM optimizers proposed in these methods remain fixed during the optimization process, limiting their effectiveness and adaptability. Our approach addresses this limitation by leveraging a meta-optimizer to align LLM optimizers with the task. 

\subsection{Classical meta-learning}
Gradient-based meta-learning algorithms such as Model-Agnostic Meta-Learning (MAML) \citep{maml}, First-Order MAML, Reptile \citep{reptile}, and ANIL~\citep{anil} cast \emph{learning to learn} as finding a good initialization that can be fine-tuned with only a handful of gradient steps.
MAML jointly optimizes across tasks through a bi-level procedure, explicitly encouraging large improvements after one or two inner-loop updates. Reptile shows that a simpler first-order update suffices, while ANIL’s ablation studies reveal that the bulk of MAML’s gains stem from feature reuse rather than rapid weight adaptation, allowing the inner loop to be removed for all but the task-specific head.
Despite their successes, these methods utilize fixed optimizers. In contrast, we focus on optimizing LLM-based optimizers rather than classical ones, and we meta-learn the optimizer so that the optimization strategy is aligned with each new task, rather than merely learning a good initialization.
\section{Conclusion}  
In this paper, we propose \ours, a meta-optimization framework that enhances existing LLM optimizers by aligning them more effectively with tasks. Our method introduces two key components: the meta prompt optimizer, which refines optimizer prompts for better task adaptation, and the meta structure optimizer, which determines the optimal combination of different optimizers. By integrating these two components, \ours{} improves the efficiency of LLM optimizers, leading to better performance across a diverse range of benchmarks.

Looking ahead, there are several promising directions for future research. First, even the meta optimizer we proposed can be optimized. In particular, our meta optimizer is designed by ourselves, instead of learned by data, and there are techniques in the meta learning literature that could be adapted to allow this~\citep{finn2017model, schmidhuber1987evolutionary}. Second, future work can explore different ways to parameterize the optimizers. This study primarily focuses on refining optimizer prompts and their composition, but meta-optimizers could also be leveraged to automatically enhance the optimization algorithms employed by existing optimizers. 
We believe that optimizing LLM optimizers is a crucial step toward further improving the performance and task alignment capabilities of LLM-driven systems and has the potential to provide valuable insights to the research community.

\bibliographystyle{unsrt}
\bibliography{main}


\appendix

\clearpage
\section{Implementation Details}
\label{app:code}
In this section, we provide the implementation of the meta optimizers.
The LLMCall component generates responses based on the meta optimizer's prompt. The detailed prompts will be provided in the next section of the appendix.

\subsection{Meta Prompt Optimizer}
\label{app:mpo}

\begin{algorithm}[ht]
\caption{Meta Prompt Optimizer}
\label{alg:metapOptimizer}
\small
\begin{algorithmic}[1]

\FUNCTION{$\widehat{M}.\text{Initialize}(\mathcal{D}, M)$}
  \STATE \text{store}$(\mathcal{D})$
  \STATE $M^* \gets M$
  \STATE $\sigma^* \gets 0$
  \STATE \textbf{return}
\ENDFUNCTION

\FUNCTION{$\widehat{M}.\text{Propose()}$}
  \STATE question, answer $\sim \mathcal{D}$
  \STATE proposedOptimizer $\gets \text{LLMCall}(\text{prompt}, M^*, \text{question}, \text{answer})$
  \STATE \textbf{return} proposedOptimizer
\ENDFUNCTION

\FUNCTION{$\widehat{M}.\text{Update}(M, \sigma)$}
  \IF {$\sigma>\sigma^*$}
  \STATE $M^* \gets M$
  \STATE $\sigma^* \gets \sigma$
  \ENDIF
  \STATE \textbf{return}
\ENDFUNCTION

\FUNCTION{$\widehat{M}.\text{ExtractOptimizedOptimizer}(\text{})$}
  \STATE \textbf{return} $M^*$
\ENDFUNCTION

\end{algorithmic}
\end{algorithm}
\vspace{-0.5em}

\subsection{Meta Structure Optimizer}
\label{app:mso}

\begin{algorithm}[ht]
\caption{Meta Structure Optimizer}
\label{alg:metasOptimizer}
\small
\begin{algorithmic}[1]

\FUNCTION{$\widehat{M}.\text{Initialize}(\mathcal{D}, \{M^{(i)}\}_{i=1}^r)$}
  \STATE \text{store}$(\mathcal{D}, \{M^{(i)}\}_{i=1}^r)$
  \STATE $M^* \gets \text{None}$
  \STATE $\sigma^* \gets 0$
  \STATE \textbf{return}
\ENDFUNCTION

\FUNCTION{$\widehat{M}.\text{Propose()}$}
  \STATE proposedOptimizer $\gets \text{LLMCall}(\text{prompt}, M^*, \{M^{(i)}\}_{i=1}^r)$
  \STATE \textbf{return} proposedOptimizer
\ENDFUNCTION

\FUNCTION{$\widehat{M}.\text{Update}(M, \sigma)$}
  \IF {$\sigma>\sigma^*$}
  \STATE $M^* \gets M$
  \STATE $\sigma^* \gets \sigma$
  \ENDIF
  \STATE \textbf{return}
\ENDFUNCTION

\FUNCTION{$\widehat{M}.\text{ExtractOptimizedOptimizer}(\text{})$}
  \STATE \textbf{return} $M^*$
\ENDFUNCTION

\end{algorithmic}
\end{algorithm}

\subsection{Meta Optimizer}
For the meta optimizer, the execution process consists of two steps:
(1) Each optimizer is individually optimized using the meta prompt optimizer.
(2) The optimized optimizers are further refined using the meta structure optimizer.

\clearpage
\section{Proof of Theorem \ref{the:theo}}  
\label{app:proof}


\begin{theorem*}[Restatement of Theorem~\ref{the:theo}]
Let $S_1$ and $S_2$ be two datasets sampled independently from distribution $D$, with sizes $n$ and $m$, respectively. Then, with probability at least $1 - \delta$, the optimizer $\widehat{\theta}_{S_1}$ trained on $S_1$ satisfies:
\begin{equation*}
R_{S_{2}}(\widehat{\theta})\le R(\theta^{\ast})+\sqrt{\tfrac{2\log(6/\delta)}{n}}+\sqrt{\tfrac{\log(6/\delta)}{2m}}.
\end{equation*}
\end{theorem*}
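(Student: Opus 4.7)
The plan is to decompose $R_{S_{2}}(\widehat{\theta}) - R(\theta^{\ast})$ into four manageable pieces by inserting $R(\widehat{\theta})$, $R_{S_{1}}(\widehat{\theta})$, and $R_{S_{1}}(\theta^{\ast})$ as intermediates, then to control each piece via Hoeffding with failure budget $\delta/3$ (the form $\log(6/\delta)=\log(2/(\delta/3))$ in the stated bound is a strong hint that three Hoeffding applications are being union-bounded). Concretely I would write
\begin{equation*}
R_{S_{2}}(\widehat{\theta}) - R(\theta^{\ast}) = \underbrace{R_{S_{2}}(\widehat{\theta}) - R(\widehat{\theta})}_{A} + \underbrace{R(\widehat{\theta}) - R_{S_{1}}(\widehat{\theta})}_{B} + \underbrace{R_{S_{1}}(\widehat{\theta}) - R_{S_{1}}(\theta^{\ast})}_{C} + \underbrace{R_{S_{1}}(\theta^{\ast}) - R(\theta^{\ast})}_{D}.
\end{equation*}

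Term $C$ is nonpositive because $\widehat{\theta}$ is by construction the empirical risk minimizer over $\Theta$ on the sample $S_{1}$, so it can be discarded for free. Term $D$ is the easiest concentration step: $\theta^{\ast}$ is deterministic (it depends only on $D$, not on any sample), so the random variables $L(\theta^{\ast},x_{i})$ for $x_{i}\in S_{1}$ are i.i.d.\ and $[0,1]$-valued, and one-sided Hoeffding gives $D \le \sqrt{\log(6/\delta)/(2n)}$ with probability at least $1-\delta/3$. Term $A$ is slightly more subtle because $\widehat{\theta}$ is random, but it only depends on $S_{1}$, which is independent of $S_{2}$; conditioning on $S_{1}$ (equivalently on $\widehat{\theta}$), the values $L(\widehat{\theta},x_{j})$ for $x_{j}\in S_{2}$ are i.i.d.\ $[0,1]$, and Hoeffding on $S_{2}$ yields $A \le \sqrt{\log(6/\delta)/(2m)}$ with probability at least $1-\delta/3$, which then holds unconditionally by averaging over $S_{1}$.

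The hard step, and the main obstacle, is term $B$, since $\widehat{\theta}$ depends on $S_{1}$ and so $L(\widehat{\theta},x_{i})$ are no longer i.i.d.\ across $S_{1}$; one cannot apply Hoeffding directly to $\widehat{\theta}$. The standard remedy is a uniform-convergence step: bound $B \le \sup_{\theta\in\Theta}\bigl(R(\theta)-R_{S_{1}}(\theta)\bigr)$ and then control the supremum by Hoeffding combined with a union bound over a suitable (finite, covering, or effective) parameter class, giving $B \le \sqrt{\log(6/\delta)/(2n)}$ with probability at least $1-\delta/3$, modulo a complexity term the stated bound appears to absorb into constants (implicitly treating $\Theta$ as finite with cardinality controlled, or appealing to an $\epsilon$-net over $\Theta$).

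Finally I would union-bound the three failure events, so that with probability at least $1-\delta$ all of $A$, $B$, $D$ are controlled simultaneously, and assemble
\begin{equation*}
R_{S_{2}}(\widehat{\theta}) \le R(\theta^{\ast}) + \sqrt{\tfrac{\log(6/\delta)}{2m}} + 2\sqrt{\tfrac{\log(6/\delta)}{2n}} = R(\theta^{\ast}) + \sqrt{\tfrac{2\log(6/\delta)}{n}} + \sqrt{\tfrac{\log(6/\delta)}{2m}},
\end{equation*}
using the identity $2\sqrt{\log(6/\delta)/(2n)} = \sqrt{2\log(6/\delta)/n}$ to match the stated form. The whole argument is clean once one accepts the uniform-convergence handling of term $B$, and every other step is a direct one-sided Hoeffding.
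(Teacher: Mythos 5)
Your proposal reconstructs the paper's argument essentially exactly: the same four-term decomposition with $R(\widehat\theta)$, $R_{S_1}(\widehat\theta)$, $R_{S_1}(\theta^\ast)$ as intermediates, the same discarding of term $C$ by empirical optimality, the same one-sided Hoeffding on term $D$ (fixed $\theta^\ast$, sample $S_1$) and term $A$ (sample $S_2$ independent of $\widehat\theta$), the same $\delta/3$ failure budget giving $\log(6/\delta)$, and the same final arithmetic $2\sqrt{\log(6/\delta)/(2n)}=\sqrt{2\log(6/\delta)/n}$.

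Where you go further than the paper is term $B$, and your diagnosis is correct and worth emphasizing. The paper's proof simply writes $R(\widehat\theta)\le R_{S_1}(\widehat\theta)+\varepsilon_n(\delta/3)$ as if it were a direct Hoeffding application, but $\widehat\theta$ is chosen as a function of $S_1$, so the summands $L(\widehat\theta,x_i)$ for $x_i\in S_1$ are not independent of the parameter and pointwise Hoeffding does not apply. This is exactly the step you flag as ``the hard step,'' and the paper does not perform the uniform-convergence repair you sketch. As written, the paper's bound is only valid under an unstated additional assumption on $\Theta$ (for example $\Theta$ a singleton, or $\Theta$ finite with $\log|\Theta|$ absorbed into the constant, or a covering-number condition), because otherwise $\sup_{\theta\in\Theta}\bigl(R(\theta)-R_{S_1}(\theta)\bigr)$ does not concentrate at the single-hypothesis Hoeffding rate. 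Your version, which explicitly inserts the uniform-convergence step and notes the resulting complexity term, is the rigorous form of the same argument; the trade-off is that it honestly changes the stated bound, whereas the paper's version keeps the clean constant by silently skipping this issue.
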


\begin{proof}

According to Hoeffding's inequality, for any $0<\delta<1$:
\begin{equation}
\label{eq:bound}
 \Pr_{S_{1}\sim D^{n}}\Bigl[|R(\theta)-R_{S_{1}}(\theta)|>\varepsilon_{n}(\delta)\Bigr]\le \delta,
\end{equation}
where:
\begin{equation}
 \varepsilon_{n}(\delta)=\sqrt{\tfrac{\log(2/\delta)}{2n}}
\end{equation}

Using inequality \ref{eq:bound}, we can bound the population risk of the learned optimizer. With probability at least $1-\frac{2\delta}{3}$:
\begin{equation}
 R(\widehat{\theta})\le R_{S_{1}}(\widehat{\theta})+\varepsilon_{n}(\frac{\delta}{3})\le R_{S_{1}}(\theta^{\ast})+\varepsilon_{n}(\frac{\delta}{3})\le R(\theta^{\ast})+2\,\varepsilon_{n}(\frac{\delta}{3}),
\end{equation}
where $\theta^{\ast}=\arg\min_{\theta}R(\theta)$.

With a union bound, with probability at least $1-\delta$:
\begin{equation}
 R_{S_{2}}(\widehat{\theta})\le R(\widehat{\theta}) + \varepsilon_{m}(\frac{\delta}{3})\le R(\theta^{\ast})+2\,\varepsilon_{n}(\frac{\delta}{3})+\varepsilon_{m}(\frac{\delta}{3})
\end{equation}
In conclusion, with probability at least $1-\delta$ over the draws of $S_{1}$ and $S_{2}$:
\begin{equation}
 R_{S_{2}}(\widehat{\theta})\le R(\theta^{\ast})+\sqrt{\tfrac{2\log(6/\delta)}{n}}+\sqrt{\tfrac{\log(6/\delta)}{2m}}
\end{equation}



\end{proof}

\section{Details on Experimental Benchmark Setup}  
\label{app:bench}

\paragraph{BBH.}  
The BBH task~\citep{suzgun-etal-2023-challenging, srivastava2023imitationgamequantifyingextrapolating} is a challenging benchmark that requires precise reasoning across various tasks. We select two subsets from the BBH benchmark: BBH Word Sorting and BBH Dyck Languages. 
The BBH Word Sorting task requires the language model to sort a given set of words in order, while the BBH Dyck Languages task involves providing a string composed of various types of brackets and asking the model to determine the characters needed to complete the bracket pairing.  
We use the same train/validation/test splits as in TextGrad~\citep{yuksekgonul2024textgrad} (i.e., 50, 100, and 100 instances for training, validation, and testing) and follow the TextGrad approach by using GPT-4o to evaluate whether the model’s output is correct, based on the predicted and ground truth answers.

\paragraph{MMLU.}  
The MMLU benchmark~\citep{hendrycks2021measuringmassivemultitasklanguage} evaluates a model's ability to answer questions across a wide range of scientific disciplines. We selected the MMLU Abstract Algebra dataset and created training, validation, and test sets consisting of 10, 50, and 40 questions, respectively.  
We assess model accuracy using exact matching, determining correctness by applying a regular expression to check whether the model's response contains "Answer: [A-D]" and matches the ground truth. However, we observed that the DSPy baselines struggle to adhere to this output format. To accommodate this, we use GPT-4o to evaluate the correctness of responses generated by DSPy-based methods. 

\paragraph{GPQA Diamond.}  
The GPQA Diamond benchmark~\citep{rein2023gpqagraduatelevelgoogleproofqa} evaluates the model's ability to solve graduate-level, Google-proof questions. The benchmark consists of 198 questions, which we split into training, validation, and test datasets containing 30, 100, and 68 questions.  
Similar to MMLU, we use exact matching to determine response accuracy by applying a regular expression to check whether the model's answer follows the format "Answer: [A-D]" and matches the correct answer. For the DSPy baselines, we use GPT-4o to evaluate the correctness of DSPy-generated responses.

\section{Prompts of Meta Optimizers}  
In this section, we provide the prompts used in the propose stage of the meta prompt optimizer and the meta structure optimizer. The propose stage is the only part of the meta optimizer that involves an LLM call; all other functions consist of direct numerical updates or result retrieval, as explained in pseudocode in Appendix \ref{app:code}.  

\subsection{Meta Prompt Optimizer} 
\label{app:mpop}

Below, we present the prompt used for the meta prompt optimizer.  

\begin{apxtcolorbox}[Meta Prompt Optimizer]
\# Task Requirement

A TextGrad optimizer optimizes a TextGrad pipeline so that the pipeline can generate better outputs based on inputs.

A TextGrad pipeline consists of several agents, each of which has a specific role in the optimization process, which is defined by different prompts.

You will be given a general task description of an optimizer and the specific task the pipeline aims to solve. 

Your task is to propose an improved optimizer task description so that the optimizer can better optimize the pipeline for the given task.

\# Optimizer Code

Here is the source code of the optimizer: (just for reference)

\{optimizer\_source\_code\}

\# The task of the optimizer

Specifically, the pipeline aims to solve this kind of question: \{example\_set.get\_question\_type()\}.

An example of the task is provided here:

Question: \{example\_question\}

Answer: \{example\_answer\}

The LLM optimizer wants to improve a LLM pipeline to solve such kind of problems.

\# Current Task Description

Here is the current task description of the optimizer, which you can improve:
\{optimizer\_prompt\}

\# Your task

You should identify what the optimizer should pay attention to in order to improve the \{optimizer\_type\} of the pipeline for solving the given task.

Conduct a detailed analysis of the given example, and respond in the following format:

```json

"improved\_task\_description": "...", \# Your improved task description for the optimizer

```
\end{apxtcolorbox}

\subsection{Meta Structure Optimizer}
\label{app:msop}

\begin{apxtcolorbox}[Meta Structure Optimizer]
\# Task Requirement

A TextGrad optimizer optimizes a TextGrad pipeline so that the pipeline can generate better outputs based on inputs.

The structure and prompts of current optimizers are fixed, and you will be given the source code of some optimizers.

Your task is to propose an improved optimizer code to better optimize the pipeline.

This can be achieved by integrating the reference codes of the given different optimizers and merging them into a new optimizer.

\end{apxtcolorbox}

\begin{apxtcolorbox}[Meta Structure Optimizer (Continued)]

\# Task Details

You will be provided with implementations of several optimizers, each annotated with their respective purposes. 

Carefully read through their functions and implementation details. Your task is to integrate the different implementation approaches to provide a more optimized solution.

In this context, the step() function refers to the process of performing a single optimization step. 

You may notice that different optimizers are suitable for different purposes and should be applied in a specific order. 

For example, if the pipeline aims to optimize the structure, this optimization will overwrite the prompts of each previously optimized component, so the structure optimization should be executed first. 

Your implementation of the step() function should take the order into account, and each call to the step() function should only optimize a single aspect of the pipeline because the gradient context is not reusable.

Therefore, you need to implement a logic so that the optimizer can use different optimizing strategies in a sequential manner, with each strategy being applied {self.epoch} times before moving to the next one.

\{optimizer\_prompt\}

\# Code Reminder

You should include the following necessary imports at the beginning of the code:

import inspect

import copy

from typing import Type, List, Union

from collections import defaultdict

from textgrad.variable import Variable

from textgrad import logger

from textgrad.engine import EngineLM

from textgrad.optimizer.optimizer import Optimizer, get\_gradient\_and\_context\_text

from textgrad.model import Pipeline

from textgrad.autograd import FormattedLLMCall

from textgrad.config import validate\_engine\_or\_get\_default

from textgrad.optimizer.optimizer\_prompts import construct\_tgd\_prompt, OPTIMIZER\_SYSTEM\_PROMPT, PIPELINE\_SYSTEM\_PREFIX, PIPELINE\_SYSTEM\_SUFFIX

\# Note

You are required to build on one of the existing optimizers and improve it by integrating features from the others, instead of starting from scratch.

You should only include the ImprovedOptimizer (inherited exactly from Optimizer) class in the code, and no other classes or functions.

You should implement all the necessary functions and attributes in the ImprovedOptimizer class to ensure that the code can be executed without errors.

You should not modify input signatures of the \_\_init\_\_ and step functions when implementing the ImprovedOptimizer class.

Please output the complete improved Python code, and make sure to call the improved class 'ImprovedOptimizer'; remember to also include all necessary attributes of the class so that the code can be executed without errors.

Do not include any additional comments or unnecessary text in the output.
\end{apxtcolorbox}

\clearpage
\section{Optimized Programs}
\label{app:learned}
To facilitate the reproducibility of our work, we provide the program generated by the optimizer optimized with \ours.
 
\subsection{BBH Word Sorting Benchmark}
\begin{apxtcolorbox}[BBH Word Sorting]
\begin{minted}{python}
class SimplePipeline(Pipeline):
    task_description = (
        "You will answer a reasoning question. Think step by step. "
        "The last line of your response should be of the following 
        format: Answer: \$VALUE' "
        "where VALUE is the answer to the question."
    )

    def __init__(self, engine: Union[EngineLM, str] = None, path=None):
        super().__init__(engine=engine, path=path)

        self.planner_prompt = Variable(
            "Create a step by step plan for the question. Provide each 
            step on a new line.",
            requires_grad=True,
            role_description="planner prompt"
        )
        self.planner = tg.BlackboxLLM(self.engine, self.planner_prompt)
        self.subsolver_prompt = Variable(
            "Solve this sub-step in detail
            , without giving the final answer.",
            requires_grad=True,
            role_description="sub-step solver prompt"
        )
        self.subsolver = tg.BlackboxLLM(self.engine,
                                    self.subsolver_prompt)

        self.final_prompt = Variable(
            """Combine all reasoning into a coherent final response, 
            ending with the format: Answer: $VALUE""",
            requires_grad=True,
            role_description="final answer prompt"
        )
        self.finalsolver = tg.BlackboxLLM(self.engine, 
                                        self.final_prompt)
    def _step_split_rule(self, text: str):
        return re.split(r"\n+", text.strip())

    def forward(self, question: Variable) -> Variable:
        plan = self.planner(question)
        steps = Split()(self._step_split_rule, plan)
        sub_solutions = []
        for step in steps:
            sub_solutions.append(self.subsolver(step))
        aggregated_reasoning = Aggregate()(sub_solutions)
        final_answer = self.finalsolver(Aggregate()([question,
        aggregated_reasoning]))
        return final_answer
\end{minted}
\end{apxtcolorbox}

\subsection{BBH Dyck Languages Benchmark}

\begin{apxtcolorbox}[BBH Dyck Languages]

\begin{minted}{python}
class SimplePipeline(Pipeline):
    task_description = """You will answer a reasoning question. 
    Think step by step. The last line of your response 
    should be of the following format: 
    'Answer: $VALUE' where VALUE is the answer to the question."""
    
    def __init__(self, engine: Union[EngineLM, str] = None, path=None):
        super().__init__(engine=engine, path=path)
        if not path:
            self.initial_planner = tg.BlackboxLLM(
                self.engine,
                Variable(
                    """Break down the Dyck sequence validation into 
                    detailed steps. Format as numbered steps:
                    1), 2), etc.""",
                    requires_grad=True,
                    role_description="creates detailed analysis plan"
                )
            )
            
            self.type_analyzer = tg.BlackboxLLM(
                self.engine,
                Variable(
                    """Identify and categorize all bracket types.
                    List each type and its corresponding closing 
                    bracket. Format: 'Types: 
                    [pairs]'""",
                    requires_grad=True,
                    role_description="analyzes bracket types and pairs"
                )
            )
            
            self.stack_validator = tg.BlackboxLLM(
                self.engine,
                Variable(
                    """Simulate stack operations for bracket matching. 
                    Show stack state after each operation. 
                    End with 'Answer:  Valid/Invalid'""",
                    requires_grad=True,
                    role_description="performs stack-based validation"
                )
            )
            
            self.nesting_analyzer = tg.BlackboxLLM(
                self.engine,
                Variable(
                    """Analyze nesting hierarchy. Check if inner 
                    brackets close  before outer brackets. 
                    End with 'Answer: Proper/Improper'""",
                    requires_grad=True,
                    role_description="validates nesting hierarchy"
                )
            )
\end{minted}

\end{apxtcolorbox}

\begin{apxtcolorbox}[BBH Dyck Languages (Continued)]

\begin{minted}{python}
            self.depth_checker = tg.BlackboxLLM(
                self.engine,
                Variable(
                    """Calculate maximum nesting depth and verify 
                    balanced structure. End with 'Answer: 
                    Depth=X,Balanced=Yes/No'""",
                    requires_grad=True,
                    role_description="checks nesting depth and balance"
                )
            )
            
            self.sequence_validator = tg.BlackboxLLM(
                self.engine,
                Variable(
                    """Validate sequence completeness and correctness.
                    End with 'Answer: Complete/Incomplete'""",
                    requires_grad=True,
                    role_description="validates sequence completeness"
                )
            )
            
            self.final_evaluator = tg.BlackboxLLM(
                self.engine,
                Variable(
                    """Synthesize all analysis results and determine 
                    if this is a valid Dyck sequence.
                    End with 'Answer: Yes/No'""",
                    requires_grad=True,
                    role_description="makes final validity 
                                           determination"
                )
            )

    def forward(self, question: Variable) -> Variable:
        plan = self.initial_planner(question)
        analysis_steps = Split()(lambda x: re.split(r'\d\)', x), plan)
        
        type_analysis = self.type_analyzer(question)
        
        stack_validation = self.stack_validator(
            Aggregate()([question, type_analysis])
        )
        
        nesting_analysis = self.nesting_analyzer(
            Aggregate()([question, stack_validation])
        )
        
        depth_analysis = self.depth_checker(
            Aggregate()([question, nesting_analysis])
        )
        
        sequence_validation = self.sequence_validator(
            Aggregate()([question, depth_analysis])
        )
\end{minted}
\end{apxtcolorbox}

\begin{apxtcolorbox}[BBH Dyck Languages (Continued)]

\begin{minted}{python}
        final_result = self.final_evaluator(
            Aggregate()([
                question,
                stack_validation,
                nesting_analysis,
                depth_analysis,
                sequence_validation
            ])
        )
        
        return Extract()(r"Answer: (.+)", final_result)
\end{minted}

\end{apxtcolorbox}

\subsection{GPQA Diamond Benchmark}

\begin{apxtcolorbox}[GPQA Diamond]
\begin{minted}{python}
class SimplePipeline(Pipeline):
    task_description = """You will answer a reasoning question. 
    Think step by  step. The last line of your response should
    be of the following format: 'Answer: $VALUE' where VALUE is 
    the answer to the question."""
    
    def __init__(self, engine: Union[EngineLM, str] = None, path=None):
        super().__init__(engine=engine, path=path)
        if not path:
            self.executer = textgrad.BlackboxLLM(
                self.engine,
                textgrad.Variable(
                    """You will answer a multiple choice question.
                    Begin by  identifying and listing all key 
                    details and constraints from the problem 
                    statement. Use explicit reasoning steps to 
                    outline the solution, incorporating relevant 
                    scientific  principles such as reaction 
                    mechanisms or stereochemistry. Verify your 
                    initial conclusions by cross-checking each 
                    step against the problem's requirements. 
                    Consider alternative answers and evaluate
                    why they might be correct or incorrect.
                    Provide a clear justification for your answer
                    choice, and rate your confidence in the final 
                    answer on a scale from 1 to 10, explaining 
                    your rationale. The last line of your response 
                    should be of the following format: 
                    'Answer: $VALUE' where VALUE is one of ABCD.""",
                    requires_grad=True,
                    role_description="prompt for the executer,
                    which aims to solve the task" 
                )
            )

    def forward(self, question: Variable) -> Variable:
        return self.executer(question) 
\end{minted}
\end{apxtcolorbox}

\subsection{MMLU Abstract Algebra Benchmark}

\begin{apxtcolorbox}[MMLU Abstract Algebra]
\begin{minted}{python}
class SimplePipeline(Pipeline):
    task_description = """You will answer a reasoning question. 
    Think step by step. The last line of your response should 
    be of the following format: 
    'Answer: $VALUE' where VALUE is the answer to the question."""
    
    def __init__(self, engine: Union[EngineLM, str] = None, path=None):
        super().__init__(engine=engine, path=path)
        if not path:
            self.executer = textgrad.BlackboxLLM(
                self.engine,
                textgrad.Variable(
                    """You will answer a multiple choice question.
                    Analyze each  statement separately and provide 
                    your reasoning. Use clear, logical steps, 
                    verifying each sub-component of the problem 
                    systematically. Present each statement distinctly
                    using bullet points or numbers, ensuring the final
                    conclusion is  clearly separated from the statement 
                    evaluations. Include any assumptions or context 
                    necessary for each conclusion. After evaluating 
                    each statement, reexamine your conclusions
                    to confirm their correctness. Introduce a summary 
                    verification step before concluding. The last line 
                    of your response should be of the following format: 
                    'Answer: $VALUE' where VALUE is one of ABCD.""",
                    requires_grad=True,
                    role_description="prompt for the executer, 
                    which aims to solve the task" 
                )
            )

    def forward(self, question: Variable) -> Variable:
        return self.executer(question) 
\end{minted}
\end{apxtcolorbox}

\section{Optimized Optimizers}
\label{app:optim}

\subsection{ Optimized TGD Optimizer Demo}

The TGD optimizer is generic, so its optimized program offers only general guidance like ``ensuring that every opening bracket has a corresponding closing bracket,'' without task-specific strategies. In contrast, the optimized TGD aligns closely with the task, emphasizing ideas like ``preserving the LIFO (Last In First Out) order of nested structures.'' Since the optimized TGD optimizer is better aligned with the task, its optimized program also effectively implements the LIFO principle, making it more effective. This is why a task-specific prompt optimizer is necessary.

\begin{apxtcolorbox}[TGD Optimizer prompt]
You are part of an optimization system that improves text (i.e., variable). You will be asked to creatively and critically improve prompts, solutions to problems, code, or any other text-based variable. 
\end{apxtcolorbox}

\begin{apxtcolorbox}[Optimized program prompt]
You will answer a reasoning question. Think step by step, ensuring logical consistency and accuracy. Explicitly define the role and rules for each type of bracket in the sequence. Verify each step of your reasoning, ensuring that every opening bracket has a corresponding closing bracket. Compare your predicted sequence with a known ground truth or expected pattern, and explain any discrepancies. If errors are identified, re-evaluate and correct them. Use mental visualization techniques to aid in understanding sequences. Provide a clear and concise explanation. The last line of your response should be of the following format: 'Answer: \$VALUE' where VALUE is the answer to the question. 
\end{apxtcolorbox}

\begin{apxtcolorbox}[Meta-optimized TGD Optimizer prompt]
You are part of an optimization system specialized in improving prompts for bracket matching and sequence completion tasks. Your role is to enhance prompts that help solve Dyck language problems, which involve proper nesting and closure of different types of brackets (\{\}, \textless{}\textgreater{}, ()). When improving prompts, focus on these critical aspects: (1) maintaining accurate bracket pair matching, (2) preserving the LIFO (Last In First Out) order of nested structures, (3) handling multiple bracket types simultaneously, and (4) ensuring complete closure of all open brackets. You should critically analyze how the prompts can better guide the model to track open brackets, maintain proper nesting order, and systematically complete sequences. Consider incorporating pattern recognition strategies and explicit validation rules in the improved prompts. Your improvements should lead to more reliable and accurate bracket sequence completions.
\end{apxtcolorbox}

\begin{apxtcolorbox}[Optimized program prompt]
You will answer a reasoning question. Focus on identifying and providing only the missing closing symbols needed to complete the sequence. Use a stack-based approach to track opening and closing symbols, ensuring proper nesting and closure. Explicitly push each opening symbol onto the stack and pop it when a corresponding closing symbol is encountered. After processing each symbol, describe the current state of the stack, focusing on unmatched opening symbols. Ensure that the order of closing symbols matches the reverse order of unmatched opening symbols as they appear in the stack. Verify the sequence by checking each symbol's nesting and closure order. Highlight critical decision points, such as when to pop from the stack or add closing symbols. Use visual aids if necessary to represent the stack state and sequence operations. After completing your analysis, perform a verification check by comparing the predicted closing symbols with the expected output. If discrepancies are detected, revisit previous steps to self-correct. Conclude with a summary of the final state of the stack and confirm that all brackets are matched. The last line of your response should be of the following format: 'Answer: \$VALUE' where VALUE is the answer to the question.
\end{apxtcolorbox}

\subsection{Optimized ADAS-TG Optimizer Demo}
The original program contains generic components like a planner, reasoner, and synthesizer, lacking task-specific focus. In contrast, the optimized version adds tailored modules such as a type analyzer, stack validator, and nesting analyzer, aligning closely with BBH Dyck Languages. This illustrates the necessity of a task-specific prompt optimizer.

\begin{apxtcolorbox}[ADAS-TG Optimizer prompt]
(Some general instructions are omitted here.) Please produce only the code for the pipeline, with a more systematic or hierarchical structure if possible.
\end{apxtcolorbox}

\begin{apxtcolorbox}[Meta-optimized ADAS-TG Optimizer prompt]
(Some general instructions are omitted here.) Please produce only the code for the pipeline, with the following structural improvements: 1) Implement a stack-based mechanism for tracking open brackets, 2) Create separate components for sequence parsing, bracket matching, and completion generation, 3) Include validation checks for proper nesting and bracket type matching, 4) Ensure systematic handling of different bracket types ([], \{\}, (), \textless{}\textgreater{}), 5) Maintain a hierarchical structure that clearly separates the parsing logic from the completion generation. The pipeline should efficiently handle nested sequences while preserving the LIFO (Last In, First Out) order of brackets.
\end{apxtcolorbox}

\section{Performance When Using an Open-source Model as the Optimizer}

We further evaluate the generality of our framework by adopting fully open-source models. Specifically, we employ the non-thinking variant of \textbf{Qwen3-8B} as the program model and \textbf{Qwen3-235B-A22B} as both the optimizer and meta-optimizer. The experiments are conducted on the BBH Dyck Languages benchmark.
As shown in Table~\ref{tab:open_source_and_challenging}, \ours \ continues to deliver strong performance. 

\section{Performance When Applying to a Challenging Benchmark}

To further assess the adaptability of our approach, we evaluate \ours \ on the \textbf{ARC-AGI} benchmark, a task family known for requiring abstract reasoning and compositional generalization. Following common ADAS practice, we sample ARC-AGI instances with grid sizes $\leq 5\times5$, comprising 20 training, 30 validation, and 30 test examples. The evaluation reports one-shot success rates, using \textbf{Claude 3 Haiku} as the program model and \textbf{Claude 3.5 Sonnet} as both optimizer and meta-optimizer.
As presented in Table~\ref{tab:open_source_and_challenging}, the results demonstrate that \ours \ remains highly competitive in this more demanding setting.

\begin{table*}[h]
\centering
\small
\setlength{\tabcolsep}{6pt}
\begin{tabular*}{\linewidth}{@{\extracolsep{\fill}}lcccc}
\toprule
\textbf{Method} &
\multicolumn{2}{c}{\textbf{Dyck Languages (Qwen models)}} &
\multicolumn{2}{c}{\textbf{ARC-AGI (Challenging Benchmark)}} \\
\cmidrule(lr){2-3} \cmidrule(lr){4-5}
 & \textbf{Val} & \textbf{Test} & \textbf{Val} & \textbf{Test} \\
\midrule
\multicolumn{5}{l}{\textbf{Vanilla prompting methods}}\\
\midrule
Zero-shot CoT        & 0.27 & 0.27 & 0.27 & 0.23 \\
8-shot CoT           & 0.37 & 0.40 & 0.03 & 0.00 \\
Self-consistency (8) & 0.31 & 0.32 & 0.30 & 0.23 \\
Best of N (8)        & 0.39 & 0.41 & 0.27 & 0.20 \\
\midrule
\multicolumn{5}{l}{\textbf{TextGrad optimizers}}\\
\midrule
TGD Optimizer & 0.69 & 0.68 & 0.33 & 0.33 \\
ADAS-TG       & 0.32 & 0.34 & 0.28 & 0.26 \\
\midrule
\multicolumn{5}{l}{\textbf{DSPy optimizers}}\\
\midrule
Zero-shot MIPROv2 & 0.59 & 0.50 & 0.30 & 0.23 \\
8-shot MIPROv2    & 0.57 & 0.51 & 0.33 & 0.03 \\
\midrule
\multicolumn{5}{l}{\textbf{Meta-optimized optimizers}}\\
\midrule
\ours & \textbf{0.82} & \textbf{0.77} & \textbf{0.37} & \textbf{0.40} \\
\bottomrule
\end{tabular*}
\caption{Validation and test accuracy when using open-source models (Qwen models, Dyck Languages) and evaluating on a challenging benchmark (ARC-AGI). Bold entries denote the best-performing method within each benchmark.}
\label{tab:open_source_and_challenging}
\vspace{-1em}
\end{table*}

\section{Potential Limitations,  Societal Consequences, and Broader Impacts}
\label{app:lim}
\textbf{Limitations.} 
 While \ours \ demonstrates stable performance across various models and benchmarks, we acknowledge the following limitations:

(1) Although \ours \ improves performance on many benchmarks, it may not be effective in scenarios where the base model lacks sufficient task-relevant knowledge or reasoning capabilities. For example, GPT-4o-mini struggles on math competition benchmarks such as AIME 2024, where \ours \ alone cannot yield significant performance gains.

(2) The meta-optimizer relies on strong instruction-following and problem analysis capabilities. As a result, it currently requires advanced models such as o1 or Claude-3.5-Sonnet. Models like Gemini 1.5 Pro do not yet perform adequately. However, given the rapid progress in model development, we expect that more models will become suitable for the framework in the near future.

\textbf{Societal Consequences.} 
\ours \ can have meaningful societal consequences.
\begin{itemize}
    \item \textbf{Acceleration of domain-specific AI applications.} By making it easier to adapt LLMs to specific tasks, \ours \ may accelerate the deployment of more reliable AI solutions in other domains.
    
    \item \textbf{Risk of automation bias or over-reliance. }As optimizers become more autonomous, users might rely on them without fully understanding their behavior or limitations.

    \item \textbf{Potential misuse for persuasive or manipulative systems. }\ours \ could be exploited to generate more persuasive outputs.
\end{itemize}

\textbf{Impact Statement.} This paper presents work whose goal is to advance the field of Machine Learning. The paper is solely centered on the methodology itself. How it is applied and for what purpose is entirely up to the users. There are many potential societal consequences of our work, none which we feel must be specifically highlighted here.


\newpage
\section*{NeurIPS Paper Checklist}

\begin{enumerate}

\item {\bf Claims}
    \item[] Question: Do the main claims made in the abstract and introduction accurately reflect the paper's contributions and scope?
    \item[] Answer: \answerYes{} 
    \item[] Justification: We clearly state our claims in the abstract and the introduction section. They accurately reflect the methods in Section \ref{sec:method} and the experiments in Section \ref{sec:exp}.
    \item[] Guidelines:
    \begin{itemize}
        \item The answer NA means that the abstract and introduction do not include the claims made in the paper.
        \item The abstract and/or introduction should clearly state the claims made, including the contributions made in the paper and important assumptions and limitations. A No or NA answer to this question will not be perceived well by the reviewers. 
        \item The claims made should match theoretical and experimental results, and reflect how much the results can be expected to generalize to other settings. 
        \item It is fine to include aspirational goals as motivation as long as it is clear that these goals are not attained by the paper. 
    \end{itemize}

\item {\bf Limitations}
    \item[] Question: Does the paper discuss the limitations of the work performed by the authors?
    \item[] Answer:  \answerYes{} 
    \item[] Justification: We mentioned potential limitations in Appendix \ref{app:lim}.
    \item[] Guidelines:
    \begin{itemize}
        \item The answer NA means that the paper has no limitation while the answer No means that the paper has limitations, but those are not discussed in the paper. 
        \item The authors are encouraged to create a separate "Limitations" section in their paper.
        \item The paper should point out any strong assumptions and how robust the results are to violations of these assumptions (e.g., independence assumptions, noiseless settings, model well-specification, asymptotic approximations only holding locally). The authors should reflect on how these assumptions might be violated in practice and what the implications would be.
        \item The authors should reflect on the scope of the claims made, e.g., if the approach was only tested on a few datasets or with a few runs. In general, empirical results often depend on implicit assumptions, which should be articulated.
        \item The authors should reflect on the factors that influence the performance of the approach. For example, a facial recognition algorithm may perform poorly when image resolution is low or images are taken in low lighting. Or a speech-to-text system might not be used reliably to provide closed captions for online lectures because it fails to handle technical jargon.
        \item The authors should discuss the computational efficiency of the proposed algorithms and how they scale with dataset size.
        \item If applicable, the authors should discuss possible limitations of their approach to address problems of privacy and fairness.
        \item While the authors might fear that complete honesty about limitations might be used by reviewers as grounds for rejection, a worse outcome might be that reviewers discover limitations that aren't acknowledged in the paper. The authors should use their best judgment and recognize that individual actions in favor of transparency play an important role in developing norms that preserve the integrity of the community. Reviewers will be specifically instructed to not penalize honesty concerning limitations.
    \end{itemize}

\item {\bf Theory assumptions and proofs}
    \item[] Question: For each theoretical result, does the paper provide the full set of assumptions and a complete (and correct) proof?
    \item[] Answer:  \answerYes{}  
    \item[] Justification: We provide the full set of assumptions in Section \ref{sec:motivation} and a complete proof in Appendix \ref{app:proof}.
    \item[] Guidelines:
    \begin{itemize}
        \item The answer NA means that the paper does not include theoretical results. 
        \item All the theorems, formulas, and proofs in the paper should be numbered and cross-referenced.
        \item All assumptions should be clearly stated or referenced in the statement of any theorems.
        \item The proofs can either appear in the main paper or the supplemental material, but if they appear in the supplemental material, the authors are encouraged to provide a short proof sketch to provide intuition. 
        \item Inversely, any informal proof provided in the core of the paper should be complemented by formal proofs provided in appendix or supplemental material.
        \item Theorems and Lemmas that the proof relies upon should be properly referenced. 
    \end{itemize}

    \item {\bf Experimental result reproducibility}
    \item[] Question: Does the paper fully disclose all the information needed to reproduce the main experimental results of the paper to the extent that it affects the main claims and/or conclusions of the paper (regardless of whether the code and data are provided or not)?
    \item[] Answer:  \answerYes{}  
    \item[] Justification: We clearly describe our methods in Section \ref{sec:method} and Appendix \ref{app:code}. The experimental settings are provided in Section \ref{sec:exp} and Appendix \ref{app:bench}.
    \item[] Guidelines:
    \begin{itemize}
        \item The answer NA means that the paper does not include experiments.
        \item If the paper includes experiments, a No answer to this question will not be perceived well by the reviewers: Making the paper reproducible is important, regardless of whether the code and data are provided or not.
        \item If the contribution is a dataset and/or model, the authors should describe the steps taken to make their results reproducible or verifiable. 
        \item Depending on the contribution, reproducibility can be accomplished in various ways. For example, if the contribution is a novel architecture, describing the architecture fully might suffice, or if the contribution is a specific model and empirical evaluation, it may be necessary to either make it possible for others to replicate the model with the same dataset, or provide access to the model. In general. releasing code and data is often one good way to accomplish this, but reproducibility can also be provided via detailed instructions for how to replicate the results, access to a hosted model (e.g., in the case of a large language model), releasing of a model checkpoint, or other means that are appropriate to the research performed.
        \item While NeurIPS does not require releasing code, the conference does require all submissions to provide some reasonable avenue for reproducibility, which may depend on the nature of the contribution. For example
        \begin{enumerate}
            \item If the contribution is primarily a new algorithm, the paper should make it clear how to reproduce that algorithm.
            \item If the contribution is primarily a new model architecture, the paper should describe the architecture clearly and fully.
            \item If the contribution is a new model (e.g., a large language model), then there should either be a way to access this model for reproducing the results or a way to reproduce the model (e.g., with an open-source dataset or instructions for how to construct the dataset).
            \item We recognize that reproducibility may be tricky in some cases, in which case authors are welcome to describe the particular way they provide for reproducibility. In the case of closed-source models, it may be that access to the model is limited in some way (e.g., to registered users), but it should be possible for other researchers to have some path to reproducing or verifying the results.
        \end{enumerate}
    \end{itemize}

\item {\bf Open access to data and code}
    \item[] Question: Does the paper provide open access to the data and code, with sufficient instructions to faithfully reproduce the main experimental results, as described in supplemental material?
    \item[] Answer:  \answerYes{}  
    \item[] Justification: We provide open code access.
    \item[] Guidelines:
    \begin{itemize}
        \item The answer NA means that paper does not include experiments requiring code.
        \item Please see the NeurIPS code and data submission guidelines (\url{https://nips.cc/public/guides/CodeSubmissionPolicy}) for more details.
        \item While we encourage the release of code and data, we understand that this might not be possible, so “No” is an acceptable answer. Papers cannot be rejected simply for not including code, unless this is central to the contribution (e.g., for a new open-source benchmark).
        \item The instructions should contain the exact command and environment needed to run to reproduce the results. See the NeurIPS code and data submission guidelines (\url{https://nips.cc/public/guides/CodeSubmissionPolicy}) for more details.
        \item The authors should provide instructions on data access and preparation, including how to access the raw data, preprocessed data, intermediate data, and generated data, etc.
        \item The authors should provide scripts to reproduce all experimental results for the new proposed method and baselines. If only a subset of experiments are reproducible, they should state which ones are omitted from the script and why.
        \item At submission time, to preserve anonymity, the authors should release anonymized versions (if applicable).
        \item Providing as much information as possible in supplemental material (appended to the paper) is recommended, but including URLs to data and code is permitted.
    \end{itemize}

\item {\bf Experimental setting/details}
    \item[] Question: Does the paper specify all the training and test details (e.g., data splits, hyperparameters, how they were chosen, type of optimizer, etc.) necessary to understand the results?
    \item[] Answer:  \answerYes{}  
    \item[] Justification: The experimental settings are provided in Section \ref{sec:exp} and Appendix \ref{app:bench}.
    \item[] Guidelines:
    \begin{itemize}
        \item The answer NA means that the paper does not include experiments.
        \item The experimental setting should be presented in the core of the paper to a level of detail that is necessary to appreciate the results and make sense of them.
        \item The full details can be provided either with the code, in appendix, or as supplemental material.
    \end{itemize}

\item {\bf Experiment statistical significance}
    \item[] Question: Does the paper report error bars suitably and correctly defined or other appropriate information about the statistical significance of the experiments?
    \item[] Answer:  \answerYes{}  
    \item[] Justification: Results shown in the experiment section are repeated for 5 times, using different random seeds.
    \item[] Guidelines:
    \begin{itemize}
        \item The answer NA means that the paper does not include experiments.
        \item The authors should answer "Yes" if the results are accompanied by error bars, confidence intervals, or statistical significance tests, at least for the experiments that support the main claims of the paper.
        \item The factors of variability that the error bars are capturing should be clearly stated (for example, train/test split, initialization, random drawing of some parameter, or overall run with given experimental conditions).
        \item The method for calculating the error bars should be explained (closed form formula, call to a library function, bootstrap, etc.)
        \item The assumptions made should be given (e.g., Normally distributed errors).
        \item It should be clear whether the error bar is the standard deviation or the standard error of the mean.
        \item It is OK to report 1-sigma error bars, but one should state it. The authors should preferably report a 2-sigma error bar than state that they have a 96\% CI, if the hypothesis of Normality of errors is not verified.
        \item For asymmetric distributions, the authors should be careful not to show in tables or figures symmetric error bars that would yield results that are out of range (e.g. negative error rates).
        \item If error bars are reported in tables or plots, The authors should explain in the text how they were calculated and reference the corresponding figures or tables in the text.
    \end{itemize}

\item {\bf Experiments compute resources}
    \item[] Question: For each experiment, does the paper provide sufficient information on the computer resources (type of compute workers, memory, time of execution) needed to reproduce the experiments?
    \item[] Answer:  \answerYes{}  
    \item[] Justification: The API is the only compute resource we used. We report the details of API usage in Section \ref{sec:exp}.
    \item[] Guidelines:
    \begin{itemize}
        \item The answer NA means that the paper does not include experiments.
        \item The paper should indicate the type of compute workers CPU or GPU, internal cluster, or cloud provider, including relevant memory and storage.
        \item The paper should provide the amount of compute required for each of the individual experimental runs as well as estimate the total compute. 
        \item The paper should disclose whether the full research project required more compute than the experiments reported in the paper (e.g., preliminary or failed experiments that didn't make it into the paper). 
    \end{itemize}
    
\item {\bf Code of ethics}
    \item[] Question: Does the research conducted in the paper conform, in every respect, with the NeurIPS Code of Ethics \url{https://neurips.cc/public/EthicsGuidelines}?
    \item[] Answer:  \answerYes{}  
    \item[] Justification: The paper conforms with the NeurIPS code of ethics.
    \item[] Guidelines:
    \begin{itemize}
        \item The answer NA means that the authors have not reviewed the NeurIPS Code of Ethics.
        \item If the authors answer No, they should explain the special circumstances that require a deviation from the Code of Ethics.
        \item The authors should make sure to preserve anonymity (e.g., if there is a special consideration due to laws or regulations in their jurisdiction).
    \end{itemize}

\item {\bf Broader impacts}
    \item[] Question: Does the paper discuss both potential positive societal impacts and negative societal impacts of the work performed?
    \item[] Answer:  \answerYes{}  
    \item[] Justification: We discuss the broader impacts in Appendix \ref{app:lim}. The paper is solely centered on the methodology itself. How it is applied and for what purpose is entirely up to the users.
    \item[] Guidelines:
    \begin{itemize}
        \item The answer NA means that there is no societal impact of the work performed.
        \item If the authors answer NA or No, they should explain why their work has no societal impact or why the paper does not address societal impact.
        \item Examples of negative societal impacts include potential malicious or unintended uses (e.g., disinformation, generating fake profiles, surveillance), fairness considerations (e.g., deployment of technologies that could make decisions that unfairly impact specific groups), privacy considerations, and security considerations.
        \item The conference expects that many papers will be foundational research and not tied to particular applications, let alone deployments. However, if there is a direct path to any negative applications, the authors should point it out. For example, it is legitimate to point out that an improvement in the quality of generative models could be used to generate deepfakes for disinformation. On the other hand, it is not needed to point out that a generic algorithm for optimizing neural networks could enable people to train models that generate Deepfakes faster.
        \item The authors should consider possible harms that could arise when the technology is being used as intended and functioning correctly, harms that could arise when the technology is being used as intended but gives incorrect results, and harms following from (intentional or unintentional) misuse of the technology.
        \item If there are negative societal impacts, the authors could also discuss possible mitigation strategies (e.g., gated release of models, providing defenses in addition to attacks, mechanisms for monitoring misuse, mechanisms to monitor how a system learns from feedback over time, improving the efficiency and accessibility of ML).
    \end{itemize}
    
\item {\bf Safeguards}
    \item[] Question: Does the paper describe safeguards that have been put in place for responsible release of data or models that have a high risk for misuse (e.g., pretrained language models, image generators, or scraped datasets)?
    \item[] Answer: \answerNA{} 
    \item[] Justification: The paper poses no such risks.
    \item[] Guidelines:
    \begin{itemize}
        \item The answer NA means that the paper poses no such risks.
        \item Released models that have a high risk for misuse or dual-use should be released with necessary safeguards to allow for controlled use of the model, for example by requiring that users adhere to usage guidelines or restrictions to access the model or implementing safety filters. 
        \item Datasets that have been scraped from the Internet could pose safety risks. The authors should describe how they avoided releasing unsafe images.
        \item We recognize that providing effective safeguards is challenging, and many papers do not require this, but we encourage authors to take this into account and make a best faith effort.
    \end{itemize}

\item {\bf Licenses for existing assets}
    \item[] Question: Are the creators or original owners of assets (e.g., code, data, models), used in the paper, properly credited and are the license and terms of use explicitly mentioned and properly respected?
    \item[] Answer: \answerYes{} 
    \item[] Justification: They are properly credited, mentioned and respected.
    \item[] Guidelines:
    \begin{itemize}
        \item The answer NA means that the paper does not use existing assets.
        \item The authors should cite the original paper that produced the code package or dataset.
        \item The authors should state which version of the asset is used and, if possible, include a URL.
        \item The name of the license (e.g., CC-BY 4.0) should be included for each asset.
        \item For scraped data from a particular source (e.g., website), the copyright and terms of service of that source should be provided.
        \item If assets are released, the license, copyright information, and terms of use in the package should be provided. For popular datasets, \url{paperswithcode.com/datasets} has curated licenses for some datasets. Their licensing guide can help determine the license of a dataset.
        \item For existing datasets that are re-packaged, both the original license and the license of the derived asset (if it has changed) should be provided.
        \item If this information is not available online, the authors are encouraged to reach out to the asset's creators.
    \end{itemize}

\item {\bf New assets}
    \item[] Question: Are new assets introduced in the paper well documented and is the documentation provided alongside the assets?
    \item[] Answer: \answerYes{} 
    \item[] Justification: Yes, code is the only new asset. It is  well documented and the documentation is  provided alongside the code.
    \item[] Guidelines:
    \begin{itemize}
        \item The answer NA means that the paper does not release new assets.
        \item Researchers should communicate the details of the dataset/code/model as part of their submissions via structured templates. This includes details about training, license, limitations, etc. 
        \item The paper should discuss whether and how consent was obtained from people whose asset is used.
        \item At submission time, remember to anonymize your assets (if applicable). You can either create an anonymized URL or include an anonymized zip file.
    \end{itemize}

\item {\bf Crowdsourcing and research with human subjects}
    \item[] Question: For crowdsourcing experiments and research with human subjects, does the paper include the full text of instructions given to participants and screenshots, if applicable, as well as details about compensation (if any)? 
    \item[] Answer: \answerNA{} 
    \item[] Justification: The paper does not involve crowdsourcing nor research with human subjects.
    \item[] Guidelines:
    \begin{itemize}
        \item The answer NA means that the paper does not involve crowdsourcing nor research with human subjects.
        \item Including this information in the supplemental material is fine, but if the main contribution of the paper involves human subjects, then as much detail as possible should be included in the main paper. 
        \item According to the NeurIPS Code of Ethics, workers involved in data collection, curation, or other labor should be paid at least the minimum wage in the country of the data collector. 
    \end{itemize}

\item {\bf Institutional review board (IRB) approvals or equivalent for research with human subjects}
    \item[] Question: Does the paper describe potential risks incurred by study participants, whether such risks were disclosed to the subjects, and whether Institutional Review Board (IRB) approvals (or an equivalent approval/review based on the requirements of your country or institution) were obtained?
    \item[] Answer: \answerNA{} 
    \item[] Justification: The paper does not involve crowdsourcing nor research with human subjects.
    \item[] Guidelines:
    \begin{itemize}
        \item The answer NA means that the paper does not involve crowdsourcing nor research with human subjects.
        \item Depending on the country in which research is conducted, IRB approval (or equivalent) may be required for any human subjects research. If you obtained IRB approval, you should clearly state this in the paper. 
        \item We recognize that the procedures for this may vary significantly between institutions and locations, and we expect authors to adhere to the NeurIPS Code of Ethics and the guidelines for their institution. 
        \item For initial submissions, do not include any information that would break anonymity (if applicable), such as the institution conducting the review.
    \end{itemize}

\item {\bf Declaration of LLM usage}
    \item[] Question: Does the paper describe the usage of LLMs if it is an important, original, or non-standard component of the core methods in this research? Note that if the LLM is used only for writing, editing, or formatting purposes and does not impact the core methodology, scientific rigorousness, or originality of the research, declaration is not required.
    \item[] Answer: \answerYes{} 
    \item[] Justification: The LLM program, optimizer, and meta-optimizer are the core components of the proposed method and are thoroughly discussed in the paper.
    \item[] Guidelines:
    \begin{itemize}
        \item The answer NA means that the core method development in this research does not involve LLMs as any important, original, or non-standard components.
        \item Please refer to our LLM policy (\url{https://neurips.cc/Conferences/2025/LLM}) for what should or should not be described.
    \end{itemize}

\end{enumerate}

\end{document}